\newtheorem{theorem}{Theorem}
\newtheorem{lemma}{Lemma}
\newtheorem{proposition}{Proposition}
\newtheorem{corollary}{Corollary}
\newtheorem{remark}{Remark}
\newtheorem{definition}{Definition}
\newcommand{\calX}{\mathcal{X}}
\newcommand{\calY}{\mathcal{Y}}
\newcommand{\calD}{\mathcal{D}}
\newcommand{\calQ}{\mathcal{Q}}
\newcommand{\calL}{\mathcal{L}}
\renewcommand{\vec}[1]{\boldsymbol{#1}}
\newcommand{\bx}{\vec{x}}
\newcommand{\by}{\vec{y}}
\newcommand{\bz}{\vec{z}}
\newcommand{\bY}{\vec{Y}}
\newcommand{\bZ}{\vec{Z}}
\newcommand{\cby}{\dot{\by}}
\newcommand{\cbz}{\dot{\bz}}
\newcommand{\prob}{\mathbf{P}}
\newcommand{\heta}{\hat\eta}
\newcommand{\bobby}{\text{PLT training cost}}
\newcommand{\predcost}{c}
\newcommand{\R}{\mathbb{R}}
\DeclareMathOperator*{\argmin}{argmin}
\newcommand{\cD}{\mathcal{D}}
\newcommand{\cH}{\mathcal{H}}
\newcommand{\cT}{\mathcal{T}}
\newcommand{\Path}[1]{\mathrm{Path}(#1)}
\newcommand{\pa}[1]{\mathrm{pa}(#1)}
\newcommand{\childs}[1]{\mathrm{Ch}(#1)}
\newcommand{\ch}[1]{\mathrm{ch}(#1)}
\renewcommand{\deg}{\mathrm{deg}}
\newcommand{\lenpath}{\mathrm{len}}
\newcommand{\depth}{\mathrm{depth}}
\newcommand{\id}{\mathrm{id}}
\newcommand{\leaf}{\ell}
\newcommand{\ceil}[1]{\lceil #1 \rceil}
\newcommand{\assert}[1]{\mathbb{I}\left \{#1 \right \}}
\newcommand{\given}{\, | \,}
\newcommand{\Algo}[1]{\textsc{#1}}
\newcommand{\AlgoPLT}{\Algo{PLT}}
\title{On the computational complexity of the probabilistic label tree algorithms}
\begin{document}

\author{
R\'{o}bert Busa-Fekete\thanks{Yahoo! Research, email: busafekete@verizonmedia.com }
\and
Krzysztof Dembczy\'{n}ski\thanks{Institute of Computing Science,
Poznan University of Technology, email: kdembczynski@cs.put.poznan.pl}
\and
Alexander Golovnev\thanks{Harvard University, email: alexgolovnev@gmail.com. Supported by a Rabin Postdoctoral Fellowship.}
\and
Kalina Jasinska\thanks{Institute of Computing Science,
Poznan University of Technology, email: kjasinska@cs.put.poznan.pl}
\and 
Mikhail Kuznetsov\thanks{Yahoo! Research, email: kuznetsov@verizonmedia.com}
\and
Maxim Sviridenko\thanks{Yahoo! Research, email: sviri@verizonmedia.com }
\and
Chao Xu\thanks{Yahoo! Research, email: chao.xu@verizonmedia.com}
}
\date{}
\maketitle

\begin{abstract}
  \noindent
  Label tree-based algorithms are widely used to tackle multi-class and multi-label problems with a large number of labels. We focus on a particular subclass of these algorithms that use probabilistic classifiers in the tree nodes. Examples of such algorithms are hierarchical softmax (HSM), designed for multi-class classification, and probabilistic label trees (\AlgoPLT{s}) that  generalize HSM to multi-label problems. If the tree structure is given, learning of \AlgoPLT{} can be solved with provable regret guaranties~\citep{WydmuchJKBD18}. However, to find a tree structure that results in a \AlgoPLT{} with a low training and prediction computational costs as well as low statistical error seems to be a very challenging problem, not well-understood yet. 
  
  In this paper, we address the problem of finding a tree structure that has low computational cost. First, we show that finding a tree with optimal training cost is NP-complete, nevertheless there are some tractable special cases with either perfect approximation or exact solution that can be obtained in linear time in terms of the number of labels $m$. For the general case, we obtain $O(\log m)$ approximation in linear time too. Moreover, we prove an upper bound on the expected prediction cost expressed in terms of the expected training cost. We also show that under additional assumptions the prediction cost of a \AlgoPLT{} is $O(\log m)$.
\end{abstract}

\thispagestyle{empty}
\addtocounter{page}{-1}
\newpage    

\section{Introduction}

We consider a class of machine learning algorithms that use hierarchical structures of classifiers to reduce the computational complexity of training and prediction in large-scale problems characterized by a large number of labels. Problems of this type are often referred to as {extreme classification}~\citep{Prabhu_Varma_2014}. The hierarchical structure usually takes a form of a label tree in which a leaf corresponds to one and only one label. The nodes of the tree contain classifiers that direct the test examples from the root down to the leaf nodes. We study the subclass of these algorithms with probabilistic classifiers, i.e., classifiers with responses in the range $[0,1]$.
Examples of such algorithms for multi-class classification include hierarchical softmax (\Algo{HSM})~\citep{Morin_Bengio_2005}, as implemented for example in \Algo{fastText}~\citep{Joulin_et_al_2016}, and conditional probability estimation trees~\citep{Beygelzimer_et_al_2009b}. For multi-label classification this idea is known under the name of probabilistic label trees (\AlgoPLT{}s)~\citep{Jasinska_et_al_2016}, and has been implemented in \Algo{Parabel}~\citep{Prabhu_et_al_2018} and \Algo{extremeText}~\citep{WydmuchJKBD18}. Note that the \AlgoPLT{} model can be treated as a generalization of algorithms for both multi-class and multi-label classification~\citep{WydmuchJKBD18}.

We present a wide spectrum of theoretical results concerning training and prediction costs of \AlgoPLT{}s. We first define the multi-label problem (Section~\ref{sec:multi_label}). 
Then, we define the \AlgoPLT{} model and state some of its important properties (Section~\ref{sec:plt}). 
As a starting point of our analysis, we define the training cost for a single instance as the number of nodes where it is involved in training classifiers (Section \ref{sec:train_comp}). 
The rationale behind this cost is that the learning methods, often used to train the node classifiers, scale linearly with the sample size. 
We note that the popular 1-vs-All approach has the cost equal $m$, the number of labels, according to our definition. 
This cost can be significantly reduced by using \AlgoPLT{}s.
We then address the problem of finding a tree structure that minimizes the training cost (Section \ref{sec:optimizing}). 
We first show that the decision version of this problem is NP-complete (Section~\ref{sec:hardness}). 
Nevertheless, there exists a $O(\log m)$ approximation that can be computed in linear time (Section~\ref{sec:log-apx}). 
We also consider two special cases: 
multi-class (Section~\ref{sec:multi_class}) and multi-label with nested labels (Section~\ref{sec:nested}), 
for which we obtain constant approximation and exact solution, respectively, both computed in linear time in $m$. 
We also consider the prediction cost defined as the number of nodes visited during classification of a test example (Section~\ref{sec:prediction}). 
We first show that under additional assumptions prediction can be made in $O(\log m)$ time. 
Finally, we prove an upper bound on the expected prediction cost expressed in terms of the expected training cost and statistical error of the node classifiers.

The problem of optimizing the training cost is closely related to the binary merging problem in databases~\citep{7164931}. 
The hardness result in~\citep{7164931}, however, does not generalize to our setting as it is limited to binary trees only. 
Nevertheless, our approximation result is partly based on the results from~\citep{7164931}. The training cost we use is similar to the one considered in~\citep{Grave_et_al_2017}, but the authors there consider a specific class of shallow trees. 
The Huffman tree is a popular choice for \Algo{HSM} (many \Algo{word2vec} implementations~\citep{Mikolov_et_al_2013} and \Algo{fastText}~\citep{Joulin_et_al_2016} use binary Huffman trees). This strategy is justified as for multi-class with binary trees the Huffman code is optimal~\citep{WydmuchJKBD18}. Surprisingly, the solution for the general multi-class case has been unknown prior to this work.
The problem of learning the tree structure to improve the predictive performance is studied in~\citep{Jernite_et_al_2017, Prabhu_et_al_2018}. 
Ideally, however, one would like to have a procedure that minimizes two objectives: the computational cost and statistical error.

\section{Multi-label classification}
\label{sec:multi_label}

Let $\calX$ denote an instance space, and let $\calL = [m]$ be a finite set of $m$ class labels. 
We assume that an instance $\bx \in \calX$ is associated with a subset of
labels $\calL_{\bx} \subseteq \calL$ (the subset can be empty); this subset is often called the set of relevant labels, while the complement
$\calL \backslash \calL_{\bx}$ is considered as irrelevant for $\bx$. We assume $m$ to be a large number (e.g., $\ge 10^5$), but the size of the set of relevant labels $\calL_{\bx}$ is usually much smaller than $m$, i.e., $|\calL_{\bx}| \ll m$. We identify the set $\calL_{\bx}$ of relevant labels with the binary
vector $\by = (y_1,y_2, \ldots, y_m)$, in which $y_j = 1 \Leftrightarrow j \in \calL_{\bx}$. By $\calY = \{0, 1\}^m$ we denote the set of all possible label vectors.
We assume that observations $(\bx, \by)$ are generated independently and identically according to a 
probability distribution $\prob(\bx, \by)$ defined on $\calX \times \calY$. Observe that the above definitions include as special cases multi-class classification (where $\|\by\|_1=1$) and $k$-sparse multi-label classification (where $\|\by\|_1\le k$).\footnote{We use $[n]$ to denote the set of integers from $1$ to $n$, and $\|\bx\|_1$ to denote the $L_1$ norm of $x$.}

We are interested in multi-label classifiers that estimate conditional probabilities of labels, $\eta_j = \prob(y_j = 1 \vert \bx)$, $j \in \calL$, as accurately as possible, i.e., with possibly small $L_1$-estimation error, i.e., $|\eta_j(\bx) - \heta_j(\bx)|$, where $\heta_j(\bx)$ is an estimate of $\eta_j(\bx)$. This statement of the problem is justified by the fact that optimal predictions in terms of the statistical decision theory for many performance measures used in multi-label classification, such as the Hamming loss, precision@k, and the micro- and macro F-measure, are determined through the conditional probabilities of labels~\citep{Dembczynski_et_al_2010c,Kotlowski_Dembczynski_2015,Koyejo_et_al_2015}.

\section{Probabilistic label trees ({\AlgoPLT}s)}
\label{sec:plt}

We will work with the set $\cT$ of rooted, leaf-labeled trees with $m$ leaves. 
We denote a single tree by $T$ and its set of leaves by $L_T$. 
The leaf $\ell_j \in L_T$ corresponds to the label $j \in \calL$. 
The set of leaves of a (sub)tree rooted in an inner node $v$ is denoted by $L(v)$. The parent node of $v$ is denoted by
$\pa{v}$, and the set of child nodes by $\childs{v}$. 
The path from node $v$ to the root is denoted by $\Path{v}$. The length of the path, i.e., the number of nodes on the path, is denoted by $\lenpath_v$. 
The set of all nodes %
is denoted by $V_T$. 
The degree of a node $v \in V_T$, i.e., the number of its children, is denoted by $\deg_v=|\childs{v}|$.

\AlgoPLT{} uses tree $T$ to factorize the conditional probabilities of labels, $\eta_j(\bx) = \prob(y_j = 1 \vert \bx)$, for $j \in \calL$. 
To this end let us define for every $\by$ a corresponding vector $\bz$ of length $|V_T|$,\footnote{Note that $\bz$ depends on $T$, but $T$ will always be obvious from the context.} whose coordinates, indexed by $v \in V_T$,\footnote{We will also use leaves $v \in L_T$ to index the elements of vector $\by$.} are given by:  
$$
z_v = \assert{\textstyle \sum_{\ell_j \in L(v)} y_{j} \ge 1} \,, \quad \textrm{or equivalently by~} z_v = \textstyle \bigvee_{\leaf_j \in L(v)} y_{j} \, .
$$
With the above definition, it holds based on the chain rule that for any $v \in V_T$:
\begin{equation}
\eta_v(\bx) = \prob(z_v = 1 \given \bx) =  \prod_{v' \in \Path{v}} \eta(\bx, v') \,,
\label{eqn:plt_factorization}
\end{equation}
where $\eta(\bx, v) = \prob(z_v = 1 \vert z_{\pa{v}} = 1, \bx)$ for non-root nodes, and $\eta(\bx, v) = \prob(z_v = 1 \given \bx)$ for the root~(see, e.g.,~\citealt{Jasinska_et_al_2016}). 
Notice that for the leaf nodes we get the conditional probabilities of labels, i.e., 
\begin{equation}
\eta_{\leaf_j}(\bx) = \eta_j(\bx) \,, \quad \textrm{for~} \ell \in L_T \,.
\label{eqn:plt_leaf_prob}
\end{equation}

The following result states the relation between probabilities of the parent node and its children. 
\begin{proposition}
\label{prop:node_cond_prob_interval}
For any $T$ and $\prob(\by \vert \bx)$, the probability $\eta_v(\bx)$ of any internal node $v \in V_T \setminus L_T$ satisfies:
\begin{equation}
\max \left \{ \eta_{v'}(\bx): v' \in \childs{v}\right \} \le \eta_v(\bx) \le \min \left \{ 1, \textstyle \sum_{v' \in \childs{v}} \eta_{v'}(\bx) \right \}\,.
\label{eqn:pop_child_cond_prob}
\end{equation}
\end{proposition}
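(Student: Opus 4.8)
The plan is to work directly from the definition $\eta_v(\bx) = \prob(z_v = 1 \given \bx)$ with $z_v = \bigvee_{\ell_j \in L(v)} y_j$, and to exploit a single structural fact: since $T$ is a rooted leaf-labeled tree, the leaves below an internal node $v$ are partitioned by its children, $L(v) = \bigsqcup_{v' \in \childs{v}} L(v')$. Consequently $z_v = \bigvee_{v' \in \childs{v}} z_{v'}$; that is, $z_v = 1$ if and only if $z_{v'} = 1$ for at least one child $v'$. Everything else follows from elementary monotonicity and the union bound applied to the conditional law $\prob(\cdot \given \bx)$.

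For the lower bound, fix an arbitrary child $v' \in \childs{v}$. The identity above shows that the event $\{z_{v'} = 1\}$ is contained in the event $\{z_v = 1\}$, so monotonicity of the (conditional) probability gives
\[
\eta_{v'}(\bx) = \prob(z_{v'} = 1 \given \bx) \le \prob(z_v = 1 \given \bx) = \eta_v(\bx) \,.
\]
Since this holds for every child, taking the maximum over $v' \in \childs{v}$ yields $\max\{\eta_{v'}(\bx) : v' \in \childs{v}\} \le \eta_v(\bx)$, which is the left inequality of~\eqref{eqn:pop_child_cond_prob}.

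For the upper bound, note first that $\eta_v(\bx)$ is a probability, so $\eta_v(\bx) \le 1$. Next, writing $\{z_v = 1\} = \bigcup_{v' \in \childs{v}} \{z_{v'} = 1\}$ and applying the union bound to $\prob(\cdot \given \bx)$ gives
\[
\eta_v(\bx) = \prob\Bigl( \bigcup_{v' \in \childs{v}} \{z_{v'} = 1\} \;\Big|\; \bx \Bigr) \le \sum_{v' \in \childs{v}} \prob(z_{v'} = 1 \given \bx) = \sum_{v' \in \childs{v}} \eta_{v'}(\bx) \,.
\]
Combining the last two displays gives $\eta_v(\bx) \le \min\{1, \sum_{v' \in \childs{v}} \eta_{v'}(\bx)\}$, the right inequality of~\eqref{eqn:pop_child_cond_prob}.

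I do not expect a real obstacle here: the only point that requires care is the disjointness of $L(v')$ for distinct children $v'$, which is exactly what guarantees that $z_v$ is the logical OR of the $z_{v'}$'s and legitimizes both the monotonicity step and the union bound; this disjointness is immediate from $T$ being a tree. Note that this argument does not invoke the chain-rule factorization~\eqref{eqn:plt_factorization} at all, so it holds for any $\prob(\by \given \bx)$ as claimed.
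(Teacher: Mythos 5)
Your proof is correct and follows essentially the same route as the paper's: both arguments rest on the pointwise identity $z_v = \bigvee_{v' \in \childs{v}} z_{v'}$, with the lower bound from monotonicity ($z_{v'} \le z_v$) and the upper bound from $z_v \le \sum_{v' \in \childs{v}} z_{v'}$ together with $\eta_v(\bx) \le 1$ (your union bound is just the expectation of that pointwise inequality). One tiny remark: disjointness of the leaf sets $L(v')$ is not actually needed anywhere — the covering $L(v) = \bigcup_{v' \in \childs{v}} L(v')$ already gives the OR identity, and neither monotonicity nor the union bound requires disjoint events.
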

\begin{proof}
We first prove the first inequality. 
From the definition of tree $T$ and $z_v$, we have that $z_v = 1 \Rightarrow z_{\pa{v}} = 1$ since $L(v) \subset L(z_{\pa{v}})$. 
Taking the expectation with respect to $\prob(\by \vert \bx)$, we obtain that $\eta_{v'}(\bx) \le \eta_{v}(\bx)$ for every $v' \in \childs{v}$. 

For the second inequality, obviously we have  $\eta_{v}(\bx) \le 1$. 
Furthermore, if $z_{v} = 1$, then there exists at least one $v' \in \childs{v}$ for which $z_{v'} = 1$. 
In other words, $z_{v} \le \sum_{v' \in \childs{v}} z_{v'}$. 
Therefore, by taking expectation with respect to $\prob(\by \vert \bx)$ we obtain $\eta_v(\bx) \le \sum_{v' \in \childs{v}} \eta_{v'}(\bx)$. 
\end{proof}

To estimate $\eta(\bx, v)$, for $v \in V_T$, we use a function class $\cH : \R^d \mapsto [0,1]$ which %
contains probabilistic classifiers of choice, for example, logistic regressors. We assign a classifier from $\cH$ to each node of the tree $T$. We shall index this %
set of classifiers by the elements of $V_T$ as $H = \{ \heta(v) \in \cH : v \in V_T \}$. 
We also denote by $\heta(\bx, v)$ the estimate of $\eta(\bx, v)$ obtained for a given $\bx$ in node $v \in V_T$. 
The estimates obey the analogous equations to (\ref{eqn:plt_factorization}) and (\ref{eqn:plt_leaf_prob}). However, as the probabilistic classifiers $\heta(v) \in H$ can be trained independently from each other, Proposition~\ref{prop:node_cond_prob_interval} may not apply to the estimated probabilities. This can be fixed by a proper normalization during prediction. 

The quality of the estimates of conditional probabilities $\heta_j(\bx)$, $j \in \calL$ can be expressed in terms of the $L_1$-estimation error in each node classifier, i.e., by $\left | \eta(\bx,v) - \heta(\bx, v) \right |$. Based on similar results from \citep{Beygelzimer_et_al_2009a} and \citep{WydmuchJKBD18} we get the following bound, which for $\ell_j \in L_T$ gives the guarantees for $\heta_j(\bx)$, $j \in \calL$.  
\begin{theorem}
\label{thm:estimation_regret}
For any tree $T$ and $\prob(\by \vert \bx)$ the following holds for $v \in V_T$:
\begin{equation}
\left | \eta_v(\bx) - \heta_v(\bx) \right |  \leq  \sum_{v' \in \Path{v}} \eta_{\pa{v'}}(\bx) \left | \eta(\bx,v') - \heta(\bx, v')  \right | \,,
\label{eqn:estimation_bound_known}
\end{equation}
where for the root node $\eta_{\pa{r_T}}(\bx) = 1$.
\end{theorem}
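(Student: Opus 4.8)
The plan is to prove~(\ref{eqn:estimation_bound_known}) by induction on the depth of $v$, i.e.\ on $\lenpath_v = |\Path{v}|$, using the multiplicative factorization $\eta_v(\bx) = \prod_{v' \in \Path{v}} \eta(\bx, v')$ from~(\ref{eqn:plt_factorization}) together with the analogous identity $\heta_v(\bx) = \prod_{v' \in \Path{v}} \heta(\bx, v')$ satisfied by the estimates. The base case is $v = r_T$: then $\Path{v} = \{r_T\}$, so $\eta_v(\bx) = \eta(\bx, r_T)$ and $\heta_v(\bx) = \heta(\bx, r_T)$, and since $\eta_{\pa{r_T}}(\bx) = 1$ by convention, the two sides of~(\ref{eqn:estimation_bound_known}) coincide and the inequality holds with equality.

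For the inductive step, set $u = \pa{v}$ and assume the bound holds for $u$. Peeling off the last edge of the path gives $\eta_v(\bx) = \eta_u(\bx)\,\eta(\bx, v)$ and $\heta_v(\bx) = \heta_u(\bx)\,\heta(\bx, v)$. I would then write the difference as
\begin{equation*}
\eta_v(\bx) - \heta_v(\bx) = \eta_u(\bx)\bigl(\eta(\bx, v) - \heta(\bx, v)\bigr) + \heta(\bx, v)\bigl(\eta_u(\bx) - \heta_u(\bx)\bigr)\,,
\end{equation*}
apply the triangle inequality, and use that classifiers in $\cH$ take values in $[0,1]$, so $\heta(\bx, v) \le 1$ and $\eta_u(\bx) \ge 0$, to obtain
\begin{equation*}
|\eta_v(\bx) - \heta_v(\bx)| \le \eta_u(\bx)\,|\eta(\bx, v) - \heta(\bx, v)| + |\eta_u(\bx) - \heta_u(\bx)|\,.
\end{equation*}
Since $u = \pa{v}$, the first term is exactly the $v' = v$ summand of the claimed bound; bounding $|\eta_u(\bx) - \heta_u(\bx)|$ by the inductive hypothesis, whose sum ranges over $\Path{u} = \Path{v} \setminus \{v\}$, supplies the remaining summands and completes the step.

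The argument is essentially routine, and the only place that requires care is the choice of how to telescope the product in the displayed identity: one must peel off the \emph{true} parent probability $\eta_u(\bx)$ as the multiplier of the node-$v$ error term (rather than $\heta_u(\bx)$), so that the coefficients in the final bound are the true node probabilities $\eta_{\pa{v'}}(\bx)$, as stated. The complementary term then necessarily carries the factor $\heta(\bx, v)$, which is harmless precisely because $\cH \colon \R^d \mapsto [0,1]$. Note that Proposition~\ref{prop:node_cond_prob_interval} is not needed here; only the factorizations and the range of $\cH$ enter.
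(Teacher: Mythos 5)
Your proof is correct. The base case and the telescoping step both check out: the identity $\eta_v(\bx)-\heta_v(\bx)=\eta_{\pa{v}}(\bx)\bigl(\eta(\bx,v)-\heta(\bx,v)\bigr)+\heta(\bx,v)\bigl(\eta_{\pa{v}}(\bx)-\heta_{\pa{v}}(\bx)\bigr)$ is exact, the triangle inequality together with $\heta(\bx,v)\in[0,1]$ gives the right coefficients, and the inductive hypothesis over $\Path{\pa{v}}=\Path{v}\setminus\{v\}$ supplies precisely the missing summands, so the coefficient of each error term is the \emph{true} probability $\eta_{\pa{v'}}(\bx)$ as required. The difference from the paper is one of presentation rather than substance: the paper does not reproduce an argument at all, but simply points to Eq.~(6) in the proof of Theorem~1 of \citet{WydmuchJKBD18} and remarks that the extension from leaves to arbitrary nodes is straightforward via the chain rule~(\ref{eqn:plt_factorization}). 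Your write-up makes that deferred argument explicit and self-contained, and your observation about which factor to peel off (the true parent probability multiplies the new node error, the estimated conditional multiplies the inherited error) is exactly the point that makes the stated coefficients come out; you are also right that Proposition~\ref{prop:node_cond_prob_interval} plays no role here, only the two factorizations and the range of $\cH$.
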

\begin{proof}
This result can be found as a part of the proof of Theorem~1 in Appendix~A in~\citep{WydmuchJKBD18}. It is presented in Eq.~(6) therein. However, this result is stated only for conditional probabilities of labels $\eta_j(\bx)$ and their estimates $\heta_j(\bx)$. The generalization to any node $v \in V_T$ is straightforward as the chain rule~(\ref{eqn:plt_factorization}) applies to any node $v$ and the necessary transformations to get the result can be applied.
\end{proof}

\section{Training complexity}
\label{sec:train_comp}

Training data $\cD = \{ (\bx_{i},\by_{i})\}_{i=1}^n$ consist of tuples of feature vector $\bx_i \in \R^d$ and label vector $\by_i\in \{ 0,1\}^m$. The labels for the entire training set can be written in a matrix form $\bY = [y_{i,j}]$ whose $j$-th column is denoted by $\cby_j$. We also use a corresponding matrix $\bZ = [z_{i,v}]$, with columns indexed by $v \in V_T$ and denoted by $\cbz_v$. 

\begin{algorithm}
\caption{\Algo{PLT.AssignToNodes}$(T, \bx, \by)$}
\label{alg:plt-assign}
\begin{small}
\begin{algorithmic}[1]
\State $P = \emptyset$, $N = \{r_T\}$ \Comment{Initialize positive and negative nodes ($r_T$ added to deal with $\by$ of all zeros)}
\For{$j \in \calL_{\bx}$} \Comment{For all labels of the training example}
\State $v = \ell_j$  \Comment{Set $v$ to a leaf corresponding to label $j$}
\While{$v$ not null \textbf{and} $v \not \in P$} \Comment{On a path to the first positive node (excluded) or the root (included)}
\State $P = P \cup \{v\}$ \Comment{Assign a node to positive nodes} 
\State $N = N \setminus \{v\}$ \Comment{Remove the node from negative nodes if added there before} 
\For{$v' \in \childs{v}$} \Comment{For all its children}
\If{$v' \not \in P$} \Comment{If a child is not a positive node}
\State $N = N \cup \{v'\}$ \Comment{Assign it to negative nodes} 
\EndIf
\EndFor
\State $v = \pa{v}$ \Comment{Move up along the path} 
\EndWhile
\EndFor
\State \textbf{return} $(P,N)$ \Comment{Return a set of positive and negative nodes for the training example}
\end{algorithmic}
\end{small}
\end{algorithm} 
We define the training complexity of \Algo{PLT}s in terms of the number of nodes in which a training example $(\bx,\by)$ is used. This number follows from the definition of the tree and the \Algo{PLT} model~(\ref{eqn:plt_factorization}). We use each training example in the root (to estimate $\prob(z_{r_T} = 1\vert \bx)$) and in each node $v$ for which $z_{\pa{v}} = 1$ (to estimate $\prob(z_v = 1\vert z_{\pa{v}}, \bx)$).  
Therefore, we define the training cost for a single training example $(\bx, \by)$ by:
\begin{equation}
c(T, \by) = 1 + \sum_{v \in V_T \setminus r_T} z_{\pa{v}} \,.
\label{eqn:learning_cost}
\end{equation}
Algorithm~\ref{alg:plt-assign} shows the \textsc{AssignToNodes} method which identifies for a training example the set of \emph{positive} and \emph{negative nodes}, 
i.e., the nodes for which the training example is treated respectively as positive (i.e, $(\bx, z_v = 1)$) or negative (i.e., $(\bx, z_v = 0)$) (see the pseudocode and the comments there for details of the method).\footnote{Notice that the \textsc{AssignToNodes} method has time complexity $O(c(T, \by))$ assuming that the set operations are performed in time $O(1)$ (e.g., the set is implemented by hash table).} 
Based on this assignment a learning algorithm of choice, either batch or online, trains the node classifiers $\heta(v, \bx)$. 
The \emph{training cost} for set $\calD$ is then expressed by:
$$
c(T, \bY) =  \sum_{i=1}^n c(T,\by_i)\,.
$$

The above quantities are justified from the learning point of view by the following reasons. On the one hand, in an online setting, the complexity of an update of \AlgoPLT{} based on a single sample $(\bx, \by)$ is indeed $O(c(T,\by))$, using a linear classifier in the inner node trained by optimizing some smooth loss with stochastic gradient descent (\Algo{SGD}) which is often the method of choice along with \AlgoPLT{s}. Moreover, even if \Algo{SGD} is used in an offline setting, the SOTA packages, like \Algo{fastText}, run several epochs over the training data. Therefore, their training time is $O( c(T, \bY) \cdot \text{\#epochs} )$, not taking into account the complexity of other layers. On the other hand, if we update the inner node models in a batch setting, the training time is again linear in $c(T, \bY)$ for several large-scale learning methods whose training process is based on optimizing some smooth loss, such as logistic regression~\citep{Zhu16c}.

The next proposition gives an upper bound for the cost $c(T, \by)$.
\begin{proposition}
\label{prop:cost_upperbound}
For any tree $T$ and vector $\by$ it holds that:
$$
c(T, \by) \le 1 + \|\by\|_1 \cdot \depth_T\cdot \deg_T\,,
$$
where $\depth_T = \max_{v \in L_T} \lenpath_v - 1$ is the depth of the tree, and $\deg_T = \max_{v \in V_T} \deg_v$ is the highest degree of a node in $T$.
\end{proposition}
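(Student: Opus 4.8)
The plan is to regroup the sum in~(\ref{eqn:learning_cost}) by parent node, and then to count how many inner nodes $u$ can satisfy $z_u = 1$. The two ingredients are: every inner node is counted at most $\deg_T$ times, and the inner nodes with $z_u=1$ all lie on the (at most $\|\by\|_1$ many) root-to-relevant-leaf paths, each contributing at most $\depth_T$ inner nodes.

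First I would rewrite $\sum_{v \in V_T \setminus r_T} z_{\pa{v}}$ by collecting, for each inner node $u \in V_T \setminus L_T$, the $\deg_u$ children $v$ having $\pa{v} = u$; this gives
$$
\sum_{v \in V_T \setminus r_T} z_{\pa{v}} \;=\; \sum_{u \in V_T \setminus L_T} \deg_u\, z_u \;=\; \sum_{u \in V_T \setminus L_T:\, z_u = 1} \deg_u \;\le\; \deg_T \cdot \bigl|\{u \in V_T \setminus L_T : z_u = 1\}\bigr|,
$$
using $\deg_u \le \deg_T$ in the last step. Next I would bound the number of inner nodes with $z_u = 1$: by the definition of $z_u$, this holds exactly when $L(u)$ contains a leaf $\ell_j$ with $y_j = 1$, i.e. when $u$ is an ancestor of such a leaf, so the set of all nodes with $z_u=1$ equals $\bigcup_{j:\, y_j = 1} \Path{\ell_j}$. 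Discarding the leaf $\ell_j$ itself from each path leaves at most $\lenpath_{\ell_j} - 1 \le \depth_T$ inner nodes per path, so a union bound over the $\|\by\|_1$ indices $j$ with $y_j=1$ yields $|\{u \in V_T \setminus L_T : z_u = 1\}| \le \|\by\|_1 \cdot \depth_T$. Substituting back and adding the leading $1$ from~(\ref{eqn:learning_cost}) gives $c(T,\by) \le 1 + \|\by\|_1 \cdot \depth_T \cdot \deg_T$.

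I do not expect a genuine obstacle here; the only points requiring care are the off-by-one between the path length $\lenpath_{\ell_j}$ and the number of inner nodes it carries, reconciled against the convention $\depth_T = \max_{v \in L_T}\lenpath_v - 1$, and the degenerate case $\|\by\|_1 = 0$, for which all $z_v$ vanish and $c(T,\by) = 1$, still consistent with the stated bound.
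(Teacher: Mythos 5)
Your proof is correct and follows essentially the same counting argument as the paper's: at most $\depth_T$ nodes per relevant label's root-to-leaf path, multiplied by the maximal degree $\deg_T$, plus the separate root term, with the $\|\by\|_1=0$ case and the off-by-one handled properly. The only cosmetic difference is that you regroup the cost by parent node (the same decomposition as in Proposition~\ref{prop:cost_decomp_training_set}) and bound the number of active inner nodes, whereas the paper counts the positive path nodes and their negative sibling nodes directly.
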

\begin{proof}
First notice that a training example is always used in the root node, either as a positive example $(\bx, 1)$, if $\|\by\|_1 > 0$, 
or as a negative example $(\bx, 0)$, if  $\|\by\|_1 = 0$. Therefore the cost is bounded by 1.
If $\|\by\|_1 > 0$, the training example is also used as a positive example in all the nodes on paths from the root to leaves corresponding to labels $j$ for which $y_j = 1$ in $\by$. 
As the root has been already counted, we have at most $\depth_T = \max_v \lenpath_v - 1$ such nodes for each positive label in $\by$. 
Moreover, the training example is used as a negative example in all siblings of the nodes on the paths determined above, 
unless it is already a positive example in the sibling node. 
The highest degree of node in the tree is $\deg_T$.
Taking the above into account, the cost $c(T, \by)$ is upperbounded by $1 + \|\by\|_1 \cdot \depth_T \cdot \deg_T$. 
The bound is tight, for example, if $\|\by\|_1 = 1$ and $T$ is a perfect $\deg_T$-ary tree 
(all non-leaf nodes have equal degree and the paths to the root from all leaves are of the same length).

\end{proof}
\begin{remark}
Consider $k$-sparse multi-label classification (i.e., $\|\by\|_1  \le k$). For a balanced tree of constant $\deg_T=\lambda (\ge 2)$ and $\depth_T=\log_{\lambda}{m}$, the training cost is $c(T, \by)=O(k\log m)$. 
\end{remark}
In the proposition below we express the cost in terms of vectors $\cbz_{v}$. Each such vector indicates the positive examples for node $v$.  We refer to $\|\cbz_{v}\|_1$ as the \emph{Hamming weight} of the node $v \in V_T$. Moreover, we use $c(v) = c(T,\bY,v)= \|\cbz_v\|_1\cdot\deg_v$ for the cost of the node $v\in V_T$.
\begin{proposition}
\label{prop:cost_decomp_training_set}
For any tree $T$ and label matrix $\bY$ it holds that:
$$
c(T, \bY) =  n + \!\!\!\sum_{v \in V_T \setminus r_T} \|\cbz_{\pa{v}}\|_1 = n + \!\!\!\sum_{v \in V_T}  \|\cbz_{v}\|_1\cdot \deg_v=n + \!\!\!\sum_{v \in V_T}  c(v)\,.
$$
\end{proposition}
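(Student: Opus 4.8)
The plan is to reduce everything to the per-example identity~(\ref{eqn:learning_cost}), namely $c(T,\by)=1+\sum_{v\in V_T\setminus r_T} z_{\pa{v}}$, and then interchange the two finite summations. Concretely, first I would write
$$
c(T,\bY)=\sum_{i=1}^n c(T,\by_i)=\sum_{i=1}^n\Bigl(1+\sum_{v\in V_T\setminus r_T} z_{i,\pa{v}}\Bigr)=n+\sum_{v\in V_T\setminus r_T}\sum_{i=1}^n z_{i,\pa{v}}\,,
$$
where the last step just exchanges the finite sums over $i\in[n]$ and over $v\in V_T\setminus r_T$. Since the column $\cbz_{\pa{v}}$ of $\bZ$ has entries $z_{i,\pa{v}}$, the inner sum $\sum_{i=1}^n z_{i,\pa{v}}$ is by definition the Hamming weight $\|\cbz_{\pa{v}}\|_1$, which yields the first claimed equality $c(T,\bY)=n+\sum_{v\in V_T\setminus r_T}\|\cbz_{\pa{v}}\|_1$.

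For the second equality I would re-index the sum $\sum_{v\in V_T\setminus r_T}\|\cbz_{\pa{v}}\|_1$ by the parent node. Because $T$ is a rooted tree, the map $v\mapsto\pa{v}$ is well defined on $V_T\setminus\{r_T\}$, its image is the set of internal nodes, and the preimage of a node $u$ is exactly $\childs{u}$, of size $\deg_u$. Hence each node $u$ contributes the term $\|\cbz_u\|_1$ exactly $\deg_u$ times (for a leaf this count is $\deg_u=0$, so leaves drop out consistently), giving $\sum_{v\in V_T\setminus r_T}\|\cbz_{\pa{v}}\|_1=\sum_{u\in V_T}\deg_u\cdot\|\cbz_u\|_1$. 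The final equality is then just the definition $c(u)=\|\cbz_u\|_1\cdot\deg_u$ substituted termwise.

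There is essentially no hard step: the whole argument is a Fubini-style interchange of finite sums followed by a counting re-indexing. The only points worth stating carefully are that the root is excluded from the outer sum but accounted for by the leading $n$ (one use of every training example at the root, regardless of $\by$), and that in the degree-weighted form leaves contribute zero; both are immediate once the fiber-size computation for $v\mapsto\pa{v}$ is spelled out.
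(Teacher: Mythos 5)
Your proof is correct and follows essentially the same route as the paper: sum the per-example identity~(\ref{eqn:learning_cost}) over the training set, exchange the two finite sums to obtain the Hamming-weight form, and then re-index by the parent node (each internal node counted once per child, leaves contributing zero since $\deg_v=0$) before substituting the definition of $c(v)$. The fiber-counting phrasing of the re-indexing is just a more explicit rendering of the paper's observation that $\sum_{v'\in\childs{v}} z_v = z_v\cdot\deg_v$.
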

\begin{proof}
Obviously, we have that:
$$
\sum_{i=1}^n c(T,\by) = \sum_{i=1}^n \left ( 1 + \sum_{v \in V_T\setminus r_T} z_{i,\pa{v}} \right ) = n + \sum_{v \in V\setminus r_T} \|\cbz_\pa{v}\|_1 \,,
$$
as elements $z_{i,\pa{v}}$ constitute matrix $\bZ = [\cbz_1, \ldots, \cbz_{|V|}]$ with columns $\cbz_v$ corresponding to the nodes of $T$.
Next, notice that for each $v \in V_T \setminus L_T$, we have:
$$
\sum_{v' \in \childs{v}} z_{v} = z_{v} \sum_{v' \in \childs{v}} 1 = z_{v} \cdot \deg_v \,.
$$
Therefore, 
$$
\sum_{i=1}^n c(T,\by) = n + \sum_{v \in V_T\setminus r_T} \|\cbz_\pa{v}\|_1 = n + \sum_{v \in V} \|\cbz_v\|_1 \cdot \deg_v \,.
$$
The last sum is over all nodes as for $v \in L_T$ we have $\deg_v = 0$. The final equation is obtained by definition of the cost of the node $v \in V_t$, i.e., $c(v) = c(T,\bY,v)= \|\cbz_v\|_1\cdot\deg_v$.
\end{proof}
Next we show a counterpart of Proposition~\ref{prop:node_cond_prob_interval} for training data. 
\begin{proposition}
\label{prop:node_Hamming_weight_interval}
For any $T$ and label matrix $\bY$, the Hamming weight $\|\cbz_{v}\|_1$ of any internal node $v \in V_T \setminus L_T$ satisfies:
\begin{equation}
\max \left \{ \|\cbz_{v'}\|_1: v' \in \childs{v}\right \} \le \|\cbz_{v}\|_1 \le \min \left \{ n, \textstyle \sum_{v' \in \childs{v}} \|\cbz_{v'}\|_1 \right \}\,,
\label{eqn:training_child_cond_prob}
\end{equation}
with equality on the left holding for label covering distributions, i.e., $\forall \by_i \exists \ell_j \in L(v): \forall_{\ell_k \in L(v)\setminus\ell_j} (y_{i,k}\!=\!1\!\Rightarrow\!y_{i,j}\!=\!1)$, and equality on the right holding for multi-class distributions, i.e., $\forall \by_i \sum_{\ell_j \in L(v)} y_{i,j} = 1$. 
\end{proposition}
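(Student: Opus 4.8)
The plan is to mirror the two-step argument used in the proof of Proposition~\ref{prop:node_cond_prob_interval}, but to run it row-by-row over the training set and then sum, exploiting $\|\cbz_v\|_1 = \sum_{i=1}^n z_{i,v}$. The only structural facts I need are that $L(v') \subseteq L(v)$ for every child $v' \in \childs{v}$, and that the child subtrees of $v$ partition $L(v)$.

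For the left inequality, fix $v' \in \childs{v}$. Since $L(v') \subseteq L(v)$ and $z_{i,v} = \bigvee_{\ell_j \in L(v)} y_{i,j}$, we get $z_{i,v'} \le z_{i,v}$ for every $i$; summing over $i$ gives $\|\cbz_{v'}\|_1 \le \|\cbz_v\|_1$, and maximizing over children finishes this side. For the right inequality, first $z_{i,v} \le 1$ for all $i$, so $\|\cbz_v\|_1 \le n$. Next, if $z_{i,v} = 1$ then some leaf $\ell_j \in L(v)$ has $y_{i,j} = 1$, and that leaf lies in exactly one child subtree $L(v')$, so $z_{i,v'} = 1$; hence $z_{i,v} \le \sum_{v' \in \childs{v}} z_{i,v'}$, and this also holds trivially when $z_{i,v} = 0$. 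Summing over $i$ yields $\|\cbz_v\|_1 \le \sum_{v' \in \childs{v}} \|\cbz_{v'}\|_1$, so $\|\cbz_v\|_1$ is bounded by the minimum of the two quantities.

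For the equality cases I would revisit these per-example relations and show they collapse to equalities. Under the multi-class condition, every $i$ has exactly one relevant leaf in $L(v)$, which belongs to exactly one child subtree, so precisely one $z_{i,v'}$ equals $1$ and the rest vanish: $z_{i,v} = 1 = \sum_{v' \in \childs{v}} z_{i,v'}$. Summing gives $\|\cbz_v\|_1 = \sum_{v' \in \childs{v}} \|\cbz_{v'}\|_1 = n$, so the right-hand minimum is attained. Under the label-covering condition I would extract a single leaf $\ell_{j^\star} \in L(v)$ that is relevant whenever any leaf of $L(v)$ is relevant; letting $v^\star \in \childs{v}$ be the child whose subtree contains $\ell_{j^\star}$, we get $z_{i,v} = 1 \Rightarrow y_{i,j^\star} = 1 \Rightarrow z_{i,v^\star} = 1$, and combined with $z_{i,v^\star} \le z_{i,v}$ this forces $z_{i,v^\star} = z_{i,v}$ for all $i$, hence $\|\cbz_{v^\star}\|_1 = \|\cbz_v\|_1$, so the left-hand maximum is attained.

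The one point that needs care is the reading of the label-covering condition. As literally quantified, the dominating leaf $\ell_j$ may depend on the example $i$, and then the left equality can fail outright (take two examples each with a single relevant leaf, sitting in different children of $v$). I expect the intended statement to choose the dominating leaf per subtree, i.e.\ $\exists \ell_j \in L(v)\ \forall i\ \forall \ell_k \in L(v)\setminus\{\ell_j\}\,(y_{i,k}=1 \Rightarrow y_{i,j}=1)$, which is precisely what makes the collapse $z_{i,v} = z_{i,v^\star}$ go through; with that interpretation the remainder of the argument is routine.
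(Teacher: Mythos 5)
Your proof is correct and follows essentially the same route as the paper's: it replays the per-example indicator inequalities from the proof of Proposition~\ref{prop:node_cond_prob_interval} and sums them over the training set, with the same equality arguments for the multi-class and label-covering cases. Your caveat about the quantifier order in the label-covering condition is well taken: the paper's own equality argument implicitly fixes a single dominating label $\ell_j$ for the node (i.e., reads the condition as $\exists \ell_j\ \forall \by_i$), which is precisely the reading you identify as the one under which the left equality holds, while the literal per-example quantification admits your two-example counterexample.
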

\begin{proof}
The proof follows the same steps as the proof of Proposition~\ref{prop:node_cond_prob_interval} with the difference that instead of expectation with respect to $\prob(\by \given \bx)$, we take the sum over the training examples. 

The left inequality becomes equality, for example, for the label covering distribution, since $z_{v} = z_{v'}$ for the child node $v'$ under which there is label $j$, i.e., $j \in L(v')$, or $v'$ is the leaf node corresponding to label $\ell_j$.

The right inequality becomes equality, for example, for the multi-class distribution, since there is always only one child $v'$ for which $z_{v'} = 1$.
\end{proof}

Another important quantity we use is the expected training cost:
$$
C_\prob(T) = \mathbb{E}_{\by} \left [ c(T,\by) \right ] = \sum_{\by \in \calY} c(T,\by) \prob(\by) \,. 
$$
Propositions \ref{prop:cost_upperbound}--\ref{prop:node_Hamming_weight_interval} can be easily generalized to the expected training cost.
\begin{proposition}
\label{prop:cost_decomp_basic}
For any tree $T$ and distribution $\prob(\by)$ it holds that:
$$
C_\prob(T)  = 1 + \sum_{v \in V_T \setminus r_T} \prob(z_{\pa{v}} = 1) =  1 + \sum_{v \in V_T} \prob(z_{v} = 1) \cdot \deg_v \,.
$$
\end{proposition}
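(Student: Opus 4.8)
The plan is to derive both equalities directly from the definition of the expected training cost by linearity of expectation, in exact parallel to the proof of Proposition~\ref{prop:cost_decomp_training_set}, with $\E_\by[\cdot]$ playing the role of the sum over the training set. I would begin from $C_\prob(T) = \E_\by\!\left[c(T,\by)\right]$ and substitute the per-example cost formula~(\ref{eqn:learning_cost}), namely $c(T,\by) = 1 + \sum_{v \in V_T \setminus r_T} z_{\pa{v}}$, noting that for a fixed tree $T$ each coordinate $z_{\pa{v}}$ is a deterministic $\{0,1\}$-valued function of $\by$. Linearity of expectation then gives $C_\prob(T) = 1 + \sum_{v \in V_T \setminus r_T} \E_\by\!\left[z_{\pa{v}}\right]$, and since $z_{\pa{v}} \in \{0,1\}$ we have $\E_\by\!\left[z_{\pa{v}}\right] = \prob(z_{\pa{v}} = 1)$, which yields the first equality.

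For the second equality I would regroup the sum over non-root nodes according to their parents: for each internal node $u \in V_T \setminus L_T$, its children $\childs{u}$ contribute exactly $\deg_u = |\childs{u}|$ terms to the sum, each equal to $\prob(z_u = 1)$. Hence $\sum_{v \in V_T \setminus r_T} \prob(z_{\pa{v}} = 1) = \sum_{u \in V_T \setminus L_T} \deg_u \cdot \prob(z_u = 1)$, and since $\deg_u = 0$ for every leaf $u \in L_T$, the summation range can be extended to all of $V_T$ without changing its value, giving $1 + \sum_{v \in V_T} \prob(z_v = 1) \cdot \deg_v$. Alternatively, one could obtain the statement simply by taking expectations term-by-term in Proposition~\ref{prop:cost_decomp_training_set} (after dividing by $n$ and interpreting the empirical averages as expectations under $\prob(\by)$), but the direct argument above is cleaner and self-contained.

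There is no genuine obstacle here; the only points requiring care are bookkeeping around the root. Specifically, one should note that $z_{r_T}$ never appears inside the indexed sum — the leading ``$1$'' already accounts for the root node, consistent with the convention in~(\ref{eqn:learning_cost}) that every training example is used at the root even when $\by$ is the all-zeros vector — and one should be explicit that $z_v$ is a function of $\by$ so that $\E_\by[z_v] = \prob(z_v = 1)$ is literally the marginal probability appearing in the statement. This mirrors how Proposition~\ref{prop:cost_decomp_training_set} handles the additive $n$ term.
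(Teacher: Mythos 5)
Your proposal is correct and follows essentially the same route as the paper's proof: take the expectation of the per-example cost $c(T,\by)$, use linearity to replace $\E_\by[z_{\pa{v}}]$ by $\prob(z_{\pa{v}}=1)$, and then regroup the sum by parent nodes (the paper phrases this via $\sum_{v'\in\childs{v}} z_v = z_v\cdot\deg_v$), extending the range to all of $V_T$ since leaves have degree zero. Your bookkeeping about the root term matches the paper's convention as well.
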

\begin{proof}
The result follows immediately by taking the expectation of $c(T, \by)$ and the same observation as in Proposition~\ref{prop:node_Hamming_weight_interval}. For $v \in V_T \setminus L_T$, we have:
$$
\sum_{v' \in \childs{v}} z_{v} = z_{v} \sum_{v' \in \childs{v}} 1 = z_{v} \cdot deg_v \,,
$$
Namely, we have
\begin{eqnarray*}
C_\prob(T) = \mathbb{E} [ c(T,\by) ] & = & \sum_{\by} c(T,\by) \prob(\by) \\
& = & \sum_{\by} \left ( 1 + \sum_{v \in V_T\setminus r_T} z_{\pa{v}} \right ) \prob(\by) \\
& = & 1 + \sum_{v \in V_T \setminus r_T} \sum_{\by} z_{\pa{v}} \prob(\by) \\
& = & 1 + \sum_{v \in V_T \setminus r_T} \prob(z_{\pa{v}} = 1) 
\\ & = & 1 + \sum_{v \in V_T} \prob(z_{v} = 1) \cdot \deg_v \,.
\end{eqnarray*}
The last sum is over all nodes as for $v \in L_T$ we have $\deg_v = 0$. 
\end{proof}

\begin{proposition}
\label{prop:pop_cost_upperbound}

For any tree $T$ and distribution $\prob(\by)$ it holds that:
$$
C_{\prob}(T) \le  1 + \depth_T \cdot \deg_T \cdot \sum_{j=1}^m \prob(y_j = 1) \,,
$$
where $\depth_T = \max_{v \in L_T} \lenpath_v - 1$ is the depth of the tree, and $\deg_T = \max_{v \in V_T} \deg_v$ is the highest degree of a node in $T$. 
\end{proposition}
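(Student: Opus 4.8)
The plan is to obtain this bound simply by taking the expectation of the pointwise estimate already established in Proposition~\ref{prop:cost_upperbound}. By definition $C_\prob(T) = \E_\by[c(T,\by)]$, and Proposition~\ref{prop:cost_upperbound} gives, for every fixed label vector $\by$, the deterministic inequality $c(T,\by) \le 1 + \|\by\|_1 \cdot \depth_T \cdot \deg_T$. Since this holds for each $\by$ in the support of $\prob$, monotonicity and linearity of expectation yield $C_\prob(T) \le 1 + \depth_T \cdot \deg_T \cdot \E_\by[\|\by\|_1]$.

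The remaining step is to evaluate $\E_\by[\|\by\|_1]$. Writing $\|\by\|_1 = \sum_{j=1}^m y_j$ and using linearity of expectation together with the fact that $y_j$ is a $\{0,1\}$-valued indicator, so $\E_\by[y_j] = \prob(y_j = 1)$, we get $\E_\by[\|\by\|_1] = \sum_{j=1}^m \prob(y_j = 1)$. Substituting this into the previous inequality gives exactly the claimed bound.

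There is essentially no obstacle in this argument: the only point worth noting is that Proposition~\ref{prop:cost_upperbound} is a worst-case union-type bound (each relevant label is charged at most $\depth_T\cdot\deg_T$ nodes, ignoring overlaps of paths), so the resulting bound on $C_\prob(T)$ inherits the same looseness. If one prefers a proof that does not route through Proposition~\ref{prop:cost_upperbound}, one can start instead from the decomposition $C_\prob(T) = 1 + \sum_{v \in V_T}\prob(z_v = 1)\cdot\deg_v$ of Proposition~\ref{prop:cost_decomp_basic}, bound $\deg_v \le \deg_T$, apply the union bound $\prob(z_v = 1) \le \sum_{\leaf_j \in L(v)}\prob(y_j = 1)$, and swap the order of summation, observing that each leaf $\leaf_j$ lies in $L(v)$ for at most $\depth_T$ internal nodes $v$ (its internal ancestors, i.e. $\lenpath_{\leaf_j} - 1 \le \depth_T$ of them); this reproduces the same inequality.
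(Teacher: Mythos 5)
Your proof is correct and follows the same route as the paper: the paper's argument is precisely to take the expectation of the pointwise bound of Proposition~\ref{prop:cost_upperbound} over $\prob(\by)$, with $\E_\by[\|\by\|_1] = \sum_{j=1}^m \prob(y_j=1)$ by linearity. The alternative derivation you sketch via Proposition~\ref{prop:cost_decomp_basic} is also valid but is not needed.
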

\begin{proof}
The proof follows immediately from Proposition~\ref{prop:cost_upperbound} by taking the expectation over $\prob(\by)$.
\end{proof}

\begin{proposition}
\label{prop:node_prob_interval}
For any $T$ and distribution $\prob$, the probability $\prob(z_v = 1)$ of any internal node $v \in V(T)\setminus L(T)$ satisfies:
\begin{equation}
\max \left \{ \prob(z_{v'} = 1): v' \in \childs{v}\right \} \le \prob(z_v = 1) \le \min \left \{ 1, \sum_{v' \in \childs{v}} \prob(z_{v'} = 1) \right \}\,.
\label{eqn:pop_child_prob}
\end{equation}
\end{proposition}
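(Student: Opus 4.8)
The plan is to reuse the deterministic (pointwise-in-$\by$) containments among the indicator variables $z_v$ that already drive Propositions~\ref{prop:node_cond_prob_interval} and~\ref{prop:node_Hamming_weight_interval}, and then pass to expectations over $\prob(\by)$ using linearity and monotonicity of $\E$. Concretely, recall that for any $\by$ and any internal node $v$ with child $v' \in \childs{v}$ we have $L(v') \subseteq L(v)$, so $z_{v'} = 1$ forces $z_v = 1$, i.e. $z_{v'} \le z_v$ deterministically; and since every relevant label below $v$ lies below exactly one child, $z_v = 1$ implies $z_{v'} = 1$ for at least one child, i.e. $z_v \le \sum_{v' \in \childs{v}} z_{v'}$ deterministically.

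For the left inequality, fix $v' \in \childs{v}$ and take expectation of $z_{v'} \le z_v$ over $\by \sim \prob$, which gives $\prob(z_{v'} = 1) = \E_{\by}[z_{v'}] \le \E_{\by}[z_v] = \prob(z_v = 1)$; since this holds for every child, it holds for the maximum over $\childs{v}$. For the right inequality, first note $\prob(z_v = 1) \le 1$ trivially because it is a probability; second, take expectation of $z_v \le \sum_{v' \in \childs{v}} z_{v'}$ over $\by \sim \prob$ to get $\prob(z_v = 1) \le \sum_{v' \in \childs{v}} \prob(z_{v'} = 1)$. Combining the last two bounds yields $\prob(z_v = 1) \le \min\{1, \sum_{v' \in \childs{v}} \prob(z_{v'} = 1)\}$, as claimed.

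An equivalent, slightly more indirect route would be to invoke Proposition~\ref{prop:node_cond_prob_interval} directly: since $\prob(z_v = 1) = \E_{\bx}[\prob(z_v = 1 \mid \bx)] = \E_{\bx}[\eta_v(\bx)]$, one can simply integrate the pointwise inequalities~(\ref{eqn:pop_child_cond_prob}) over $\bx$. I would mention this as the "population analogue of Proposition~\ref{prop:node_cond_prob_interval}" but carry out the direct argument above, since it is cleaner.

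I expect no real obstacle here: the substance is identical to the earlier two propositions, and the only new ingredient is that expectation is a monotone linear operator. The one point that needs a word of care is the outer $\min\{1,\cdot\}$ in the upper bound — the clamp to $1$ must be justified separately (it is just "a probability is at most $1$"), because $\sum_{v' \in \childs{v}} \prob(z_{v'} = 1)$ can genuinely exceed $1$ when labels below different children co-occur.
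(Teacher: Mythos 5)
Your proof is correct. The deterministic facts you use — $z_{v'} \le z_v$ for each child $v'$ (since $L(v') \subseteq L(v)$) and $z_v \le \sum_{v' \in \childs{v}} z_{v'}$ — are exactly the ones underlying Proposition~\ref{prop:node_cond_prob_interval}, and taking expectations over $\by \sim \prob$ plus the trivial clamp $\prob(z_v = 1) \le 1$ gives the claim; your explicit remark that the $\min\{1,\cdot\}$ needs its own one-line justification is a fair point of care. The paper's own proof is the route you label as the ``more indirect'' alternative: it simply takes the expectation over $\prob(\bx)$ of the conditional inequalities~(\ref{eqn:pop_child_cond_prob}) in Proposition~\ref{prop:node_cond_prob_interval}, i.e.\ it uses $\prob(z_v = 1) = \E_{\bx}[\eta_v(\bx)]$ and monotonicity of expectation, and is done in one line. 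The two arguments are substantively the same — yours re-runs the pointwise indicator argument directly on the marginal distribution of $\by$, which makes the proof self-contained and independent of the conditional statement, while the paper's version buys brevity by reusing Proposition~\ref{prop:node_cond_prob_interval} as a black box. Either is acceptable; there is no gap.
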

\begin{proof}
The proposition follows immediately from Proposition~\ref{prop:node_cond_prob_interval} by taking the expectation over $\prob(\bx)$. 
\end{proof}

Next, we state the relation between the finite sample and expected training costs. 
Using the fact that $c(T, \by)$ has bounded difference property, we can compute its deviation from its mean as follows.
\begin{proposition}
\label{prop:conc_train_cost}
For any PLT with label tree $T$, it holds that
\[
\prob \big ( \left| c(T, \by ) -  C_\prob(T) \right|>\epsilon \big) \le 2e^{-2\epsilon^2/\sum_{i=1}^m d_i^2}\,,
\]
where $d_i=\sum_{j\in \Path{\ell_i} } \deg_{\pa{j}}$.
\end{proposition}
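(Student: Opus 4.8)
The plan is to read $c(T,\cdot)$ as a function of the $m$ binary coordinates $y_1,\dots,y_m$ of $\by$ and apply the bounded-differences (McDiarmid) inequality; the whole content of the argument is the computation of the per-coordinate Lipschitz constants, which I claim are exactly the $d_i$ appearing in the statement.

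For the bounded-difference estimate I would start from the pointwise identity $c(T,\by)=1+\sum_{v\in V_T}z_v\deg_v$, which follows from \eqref{eqn:learning_cost} by the same regrouping used in the proof of Proposition~\ref{prop:cost_decomp_basic} (in $\sum_{v\in V_T\setminus r_T}z_{\pa{v}}$ collect, for each internal node $u$, the $\deg_u$ summands $z_{\pa{v}}$ with $\pa{v}=u$), now applied to a single label vector rather than to a sum over the sample. Since $z_v=\bigvee_{\ell_j\in L(v)}y_j$, the bit $z_v$ can depend on $y_i$ only when $\ell_i\in L(v)$, i.e. only when $v$ is an ancestor of the leaf $\ell_i$; equivalently $v\in\Path{\ell_i}$, and among these only the internal ones, $v\in\Path{\ell_i}\setminus\{\ell_i\}$, carry a nonzero $\deg_v$. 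Flipping a single coordinate $y_i$ leaves every other $z_v$ untouched and changes each of these $z_v$ by at most $1$ (it is $0/1$-valued), so
\[
\bigl|c(T,\by)-c(T,\by')\bigr|\;\le\;\sum_{v\in\Path{\ell_i}\setminus\{\ell_i\}}\deg_v\;=\;\sum_{j\in\Path{\ell_i}}\deg_{\pa{j}}\;=\;d_i
\]
whenever $\by$ and $\by'$ agree off coordinate $i$; the middle equality is the reindexing $v=\pa{j}$, the term $j=r_T$ being vacuous as the root has no parent. This bound is tight — comparing the all-zeros label vector with the $i$-th unit vector gives a gap of exactly $d_i$ — so there is no slack to be squeezed out.

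Given the constants $d_i$, McDiarmid's inequality yields directly
\[
\prob\bigl(\bigl|c(T,\by)-\E_{\by}\,c(T,\by)\bigr|>\epsilon\bigr)\;\le\;2\exp\!\Bigl(-\tfrac{2\epsilon^{2}}{\sum_{i=1}^{m}d_i^{2}}\Bigr),
\]
and $\E_{\by}\,c(T,\by)=C_\prob(T)$ by definition, which is the claim. The one point that genuinely needs care — and the main obstacle — is that McDiarmid in this form is applied to $y_1,\dots,y_m$ as the underlying independent randomness; if the label distribution $\prob(\by)$ is allowed to correlate the coordinates, one should instead run the Azuma argument on the Doob martingale $\E[c(T,\by)\mid y_1,\dots,y_k]$ and verify that its increments are still controlled by the same $d_k$, which is where any implicit independence assumption on the labels enters.
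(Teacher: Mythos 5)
Your argument is essentially the paper's own proof: the paper likewise applies the bounded-differences (McDiarmid) inequality from Section~3.2 of Boucheron--Lugosi--Massart after bounding the effect of flipping a single coordinate $y_i$ by $d_i$, which is exactly your node-wise computation via $c(T,\by)=1+\sum_{v\in V_T}z_v\deg_v$, only done less explicitly there. Your closing caveat about independence of the coordinates $y_1,\dots,y_m$ is a fair observation --- the paper invokes the inequality without addressing it either --- but it does not change the fact that the route is the same.
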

\begin{proof}
We can directly apply the concentration result for functions with bounded difference (see Section 3.2 of \citealt{BoLuMa13}). It only remains to upper bound $\sup_{\by, \by^{(i)}} | c(T, \by ) - c(T, \by^{(i)}) | $ for any $i$, where $\by^{(i)}$ is the same as $\by \in \{ 0,1\}^m$ except that the $i$ component is flipped. First, consider the case when $y_i=0$ and let us flip its value. Based on Proposition~\ref{prop:cost_decomp_basic}, the training algorithm of \Algo{PLT} %
updates each children of an inner node $v$ if there is at least one leaf $\ell$ in the subtree below it for which $y_\ell=1$, otherwise it does not update the children classifier with the given example. Thus $| c(T, \by ) - c(T, \by^{(i)}) |$ cannot be bigger than $d_i$. The same argument applies to the case when $y_i = 1$ which concludes the proof.
\end{proof}
Note that $d_i \le 2 \log m$ for balanced binary trees, thus $\tfrac{1}{n} \sum_{i=1}^n c(T, \by_i)$ is close to its expected value with $\Omega (\sqrt{m} \log m)$ samples with high probability. This lower bound suggests that one should not consider optimizing the training complexity based on fewer examples, since the empirical value $\tfrac{1}{n} \sum_{i=1}^n c(T, \by_i)$ which one would like to optimize over the space of labeled trees, might significantly deviate from its expected value.

\section{Optimizing the training complexity \texorpdfstring{($\min_{T\in \cT} c(T,\bY )$)}{}}
\label{sec:optimizing}

In this section, we focus on the algorithmic and hardness results for minimizing the cost $c(T,\bY)$. In the analysis, we mainly refer to matrices $\bY$ and $\bZ$ via their columns $\cby_j \in \{0,1\}^n$, $j \in [m]$, and $\cbz_v \in \{0,1\}^n$, $v \in V_T$, respectively. We assume $\bY$ to be stored efficiently, for example, as a sparse matrix whenever it is possible. We also use $p_j=\|\cby_j\|_1/n$ and $p_v=\|\cbz_v\|_1/n$, which are the fractions of positive examples in the corresponding nodes. 

\subsection{Hardness of training cost minimization}
\label{sec:hardness}
First we formally define the decision version of the cost minimization problem. 
\begin{definition}[\bobby{} problem]
For a label matrix $\bY$ and a parameter $w$ decide whether there exists a tree $T \in \cT$ such that $c(T,\bY) \le w$.
\end{definition}

We prove NP-hardness of PLT training cost by a reduction from the Clique problem (which is one of the classical NP-complete problems~\citealt{GJ79}) defined as follows. 
\begin{definition}[Clique]
For an undirected graph $G=(V,E)$ and a parameter $1\leq k\leq |V|$, decide whether $G$ contains a clique on $k$ nodes.
\end{definition}
\begin{restatable}{theorem}{npcomplete}
\label{thm:np_complete}
The \bobby{} problem is NP-complete.
\end{restatable}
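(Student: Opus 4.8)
The plan is to establish membership in NP first, then reduce from Clique. Membership is easy: given a tree $T$, we can compute $c(T,\bY)$ in time polynomial in $n$, $m$, and $|V_T|$ using the decomposition of Proposition~\ref{prop:cost_decomp_training_set}, namely $c(T,\bY) = n + \sum_{v \in V_T} \|\cbz_v\|_1 \cdot \deg_v$, where each $\cbz_v = \bigvee_{\ell_j \in L(v)} \cby_j$ is computable by taking coordinatewise ORs of the relevant columns. Since an optimal tree need not have more than $O(m)$ internal nodes (a leaf-labeled tree on $m$ leaves with no degree-$1$ internal nodes has at most $m-1$ internal nodes, and degree-$1$ nodes never help), the certificate $T$ has polynomial size and is verifiable in polynomial time.

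For NP-hardness, I would reduce Clique to the \bobby{} problem. Given a graph $G=(V,E)$ and target $k$, the idea is to build a label matrix $\bY$ so that a low-cost tree is forced to "group together" $k$ labels whose pairwise unions behave like a clique. Concretely, I would try a construction where each vertex $v \in V$ corresponds to a label (column $\cby_v$), and the rows (training examples) are indexed so that the Hamming weight $\|\cby_u \vee \cby_v\|_1$ of a pairwise union encodes whether $\{u,v\} \in E$: edges give a small union, non-edges give a large union. One natural attempt: give each label a private block of rows plus, for each non-edge $\{u,v\}$, a row where exactly $y_u = y_v = 1$ (or some gadget penalizing putting $u$ and $v$ under a common low node). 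Then the cost contribution of an internal node $v$ with children set $S$ is $\deg_v \cdot \|\bigvee_{j \in S}\cby_j\|_1$, which grows with the "spread" of the merged labels; the cheapest way to save is to merge labels that form a clique (their union stays small) under a single node of modest degree. I would pick the numerical parameters (number of private rows per label, the degree structure, and the threshold $w$) so that $c(T,\bY) \le w$ is achievable exactly when $G$ has a $k$-clique: the clique lets you install one internal node over those $k$ labels at a controlled cost, while any non-clique group of size $\geq k$ incurs strictly more because some non-edge row inflates the union's Hamming weight.

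The main obstacle — and the step I expect to require the most care — is the reverse direction of the reduction: showing that \emph{any} tree $T$ with $c(T,\bY) \le w$ yields a $k$-clique, not just the "intended" caterpillar-like trees. The cost $c(T,\bY) = n + \sum_v \|\cbz_v\|_1 \deg_v$ is a global quantity, and an adversarial tree could try to spread savings across many internal nodes or use an unbalanced shape, so I would need structural lemmas bounding how the optimum decomposes: e.g., that any internal node's contribution is at least (degree)$\times$(sum over children-subtree leaves of their private-row counts, minus overlaps), forcing a "telescoping" lower bound $c(T,\bY) \ge n + \sum_j \|\cby_j\|_1 \cdot (\text{something}) + (\text{penalty terms from non-edges appearing in a common subtree})$. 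Making the private-row counts large relative to the non-edge penalties (or vice versa, depending on which direction needs to dominate) is the delicate balancing act; this is also where I would lean on the binary-merging hardness machinery of~\citep{7164931}, adapting it past the binary-tree restriction by allowing the clique gadget to absorb a single high-degree merge. Once the lower bound matches $w$ only in the clique case, both directions close and NP-completeness follows.
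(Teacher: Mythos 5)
Your NP-membership argument is fine and matches the paper. But the hardness half is a plan rather than a proof, and the plan has two concrete problems. First, the gadget as written works against you: with a private block per label plus, for each \emph{non-edge} $\{u,v\}$, a row in which $y_u=y_v=1$, the union weight is $\|\cby_u\vee\cby_v\|_1=\|\cby_u\|_1+\|\cby_v\|_1-\|\cby_u\wedge\cby_v\|_1$, so a shared row \emph{lowers} the cost of merging $u$ and $v$. Your construction therefore makes non-adjacent labels cheaper to group, i.e.\ it rewards independent sets, the opposite of your stated intent; and since each non-edge contributes only a $\pm 1$ to node weights that are dominated by the private blocks, it is not at all clear that any threshold $w$ separates the two cases. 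Second, the part you yourself flag as the obstacle---showing that \emph{every} tree of cost at most $w$ yields a $k$-clique, over all possible tree shapes and degrees---is exactly where the real work lies, and you give no mechanism for it. Your hope of importing the binary-merging hardness of \citep{7164931} does not help: as the paper notes, that result is restricted to binary trees and does not imply hardness of the \bobby{} problem, and the paper's proof does not use it.

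For comparison, the paper's reduction resolves both issues with a rather different encoding. It first pads the Clique instance so that the target clique size is exactly $k=\tfrac{2n}{3}$ of the (directed, self-looped) graph, and then takes the \emph{arcs} of the graph as labels: each label $\cby_e$ for an arc $(v_i,v_j)$ has ones at coordinates $i,j$ among the first $n$ coordinates and ones on an additional all-ones block of length $\ell=0.145\,n^3$. The all-ones block charges every internal node $\ell$ per child, which caps the number of internal nodes; with $i=2$ internal nodes, grouping $e$ arcs spanning $t$ vertices costs essentially $et$, and the threshold is tuned so that $et-en+\tfrac{4n^3}{27}\ge t^3-t^2n+\tfrac{4n^3}{27}\ge 0$ with equality only at $t=\tfrac{2n}{3}$, forcing $e=k^2$, $t=k$, i.e.\ a clique; the cases of one internal node and of $i\ge 3$ internal nodes are excluded by direct computation and by H\"older's inequality, respectively. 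Until you supply an analogous quantitative argument---a gadget whose sign is right, a device (like the all-ones block) controlling arbitrary tree shapes, and a tight extremal inequality pinning down the clique---your reduction does not go through.
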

We remark that a problem similar to \bobby{} has been studied in the database literature. In particular, the problem of finding an optimal \emph{binary} tree is proven to be NP-hard in~\citep{7164931}. Note that the result of~\citep{7164931} does not imply hardness of the \bobby{} problem.

\subsection{Logarithmic approximation for multi-label case}
\label{sec:log-apx}
Despite the hardness of the problem, we are able to give a simple algorithm which achieves an $O(\log m)$ approximation.
As remarked above, the problem of finding an optimal \emph{binary} PLT tree is equivalent to the binary merging problem considered in~\citep{7164931}.

\begin{definition}[Binary merging]
For a ground set $U$ of size $n$, and a collection of $m$ sets,~$A_1,...,A_m$ where each $A_i\subseteq U$, a merge schedule is a pair of a full binary tree\footnote{A full binary tree is a tree where every non-leaf node has exactly $2$ children.} $T_{binmerge}^*$ with $m$ labeled leaves, and a permutation~$\pi:[m]\rightarrow [m]$ which assigns every set~$A_i$ to the leaf number~$\pi(i)$. The binary merging problem is to find a merge schedule of the minimum cost:
\[
\text{cost}(T,\pi,A_1,...,A_m)=\sum\limits_{v\in T}|A_v| \, ,
\]
where~$A_v$ is the union of sets~$A_i$ assigned to the leaves of the subtree rooted at the node~$v$.     
\end{definition}

While binary merging is NP-complete, it admits an $O(\log m)$ approximation~\citep{7164931}. 
The lemma below, showing that any \bobby~problem can be 2-approximated by a binary PLT tree, gives a simple $O(\log{m})$-approximation for the \bobby{} problem: it suffices to find an optimal binary tree using the algorithms from~\citep{7164931} (e.g., one of the algorithms presented there is a simple modification of the Huffman tree building algorithm). 

\begin{lemma}
\label{lem:binary}
For any \bobby~instance $\bY$, it holds that
\[
\min_{T\in \cT} c(T, \bY) \le 2 \min_{T\in \cT_{\text{bin}}} c(T, \bY)
\, ,
\]
where $\cT_{\text{bin}}$ denotes the set of trees in which each internal node (including the root) has degree $2$.
\end{lemma}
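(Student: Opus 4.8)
The plan is to read the claim as the comparison of a single objective minimized over two nested feasible families, and to prove it by the elementary principle that enlarging the feasible set cannot increase the optimum. The stated constant $2$ will then come out with room to spare — the constant $1$ already suffices — but I keep $2$ since nothing stronger is needed downstream. First I would fix a binary tree $T^\star\in\cT_{\text{bin}}$ attaining $\min_{T\in\cT_{\text{bin}}}c(T,\bY)$; such a minimizer exists because only finitely many leaf-labeled trees on $m$ leaves exist.

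The one genuinely substantive step is to confirm that $c(\cdot,\bY)$ is \emph{the same function} on $\cT_{\text{bin}}$ and on the larger family $\cT$, so that restricting to binary trees shrinks the feasible region without altering the objective. Here I would appeal to Proposition~\ref{prop:cost_decomp_training_set}, which states that, for every $T\in\cT$,
\[
c(T,\bY)=n+\sum_{v\in V_T}\|\cbz_v\|_1\cdot\deg_v\,.
\]
This expression is built only from the Hamming weights $\|\cbz_v\|_1$ and the degrees $\deg_v$ that the tree induces, and it never presupposes that internal nodes have degree exactly $2$; hence $c(T^\star,\bY)$ denotes the same number whether $T^\star$ is regarded as lying in $\cT_{\text{bin}}$ or in $\cT$. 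Since every member of $\cT_{\text{bin}}$ is a rooted, leaf-labeled tree on $m$ leaves, we have $\cT_{\text{bin}}\subseteq\cT$, and therefore
\[
\min_{T\in\cT}c(T,\bY)\;\le\;c(T^\star,\bY)\;=\;\min_{T\in\cT_{\text{bin}}}c(T,\bY)\;\le\;2\min_{T\in\cT_{\text{bin}}}c(T,\bY),
\]
which is the asserted bound.

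The step I would be most careful about — rather than a true obstacle — is exactly this objective-agreement across families: a priori one might worry that the training cost of a binary tree computed inside the binary model differs from its cost as an element of $\cT$, and the whole argument hinges on that worry being unfounded. The degree-weighted decomposition above settles it in one line, because both $\|\cbz_v\|_1$ and $\deg_v$ are defined identically for trees of arbitrary arity. With the objective pinned down as a single function on $\cT$, the inequality reduces to minimizing it over a subset versus over the whole set. I note that the technically demanding parts of this subsection lie elsewhere — in the binary-merging approximation of~\citep{7164931} and in the complementary constant-factor comparison of the two families — and that neither is invoked for the inequality as stated here.
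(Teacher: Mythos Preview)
Your argument is logically valid for the inequality exactly as displayed: since $\cT_{\text{bin}}\subseteq\cT$, the minimum over the larger family is at most the minimum over the smaller one, and so the inequality holds already with constant~$1$. You correctly flag this yourself.

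However, this is not what the paper actually proves, and not what the lemma is intended to say. The paper's proof starts from an optimal \emph{general} tree $T^*_{\bY}\in\argmin_{T\in\cT}c(T,\bY)$ and constructs a \emph{binary} tree whose cost is at most $2\,c(T^*_{\bY},\bY)$, by replacing each high-degree node with an arbitrary binary subtree on the same children and bounding the resulting per-node cost increase. This establishes the \emph{reverse} inequality
\[
\min_{T\in\cT_{\text{bin}}}c(T,\bY)\;\le\;2\min_{T\in\cT}c(T,\bY),
\]
which is the non-trivial direction. It is also the direction that does the work in the surrounding argument: the $O(\log m)$ approximation for binary merging~\citep{7164931} gives a binary tree within $O(\log m)$ of $\min_{\cT_{\text{bin}}}$, and only the bound $\min_{\cT_{\text{bin}}}\le 2\min_{\cT}$ converts this into an $O(\log m)$ approximation for the full problem. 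The displayed inequality in the lemma statement is thus almost certainly a typo (the two sides are swapped), and your observation that ``the constant~$1$ already suffices'' is the tell.

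So while your proof is correct for what is written, it does not supply the content the paper needs. The substantive step you are missing is the per-node binarization bound: a degree-$d$ node $v$ with children $v_1,\ldots,v_d$ has cost $d\cdot\|\cbz_{v_1}\vee\cdots\vee\cbz_{v_d}\|_1$, and any binary tree on these $d$ leaves has $d-1$ internal nodes, each of degree~$2$ and Hamming weight at most $\|\cbz_{v_1}\vee\cdots\vee\cbz_{v_d}\|_1$, giving total cost at most $2(d-1)\cdot\|\cbz_{v_1}\vee\cdots\vee\cbz_{v_d}\|_1<2\,c(v)$.
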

\begin{proof}
Consider an optimal tree $T^*_{\bY} \in \argmin_{T\in \cT} c(T,\bY)$. Starting from the root, replace every node with more than $2$ children by an arbitrary binary tree whose set of leaves is the set of children of this node. Consider a node $v$ of $T^*_{\bY}$, let $v_1,\ldots,v_d$ be the children of $v$. The cost of the node $v$ is $c(v)=\deg_v\cdot\|\cbz_{v_1}\vee\ldots\vee \cbz_{v_d}\|_1$. Any binary tree with the leaves $v_1,\ldots,v_d$ has $(\deg_v-1)$ internal nodes, each of them has degree two and the Hamming weight of its label $\cbz$ is at most $\|\cbz_{v_1}\vee\ldots\vee \cbz_{v_d}\|_1$. Thus, the sum of the costs of the internal nodes of this binary tree is at most $2(\deg_v-1)\cdot\|\cbz_{v_1}\vee\ldots\vee \cbz_{v_d}\|_1 < 2\deg_v\cdot\|\cbz_{v_1}\vee\ldots\vee \cbz_{v_d}\|_1=2c(v)$. When we repeat this procedure for all internal nodes of $T^*_{\bY}$, we increase the cost of each node by at most a factor of $2$. Thus, the resulting binary tree is a $2$-approximation of $T^*_{\bY}$.
\end{proof}
\begin{figure}[ht]
\begin{center}
\scalebox{1}{
\begin{tikzpicture}
[every node/.style={draw=black,rectangle,minimum width=8mm,minimum height=5mm,black,inner sep=.5mm,rounded corners=1ex,draw,font=\footnotesize,inner sep=3pt},
level 1/.style={sibling distance=14mm},
level 2/.style={sibling distance=14mm}, 
level distance=9mm,line width=.3mm,
scale=1,transform shape]


\node at (5,5.3) {$v$} [grow=down]
  child {node {$\cbz_{v_1}$}}
  child {node {$\cbz_{v_2}$}}
  child {node {$\cbz_{v_3}$}}
  child {node {$\cbz_{v_4}$}}
  ;
  
\node at (12,6) {$v$} [grow=down]
  child {node {$\cbz_{v_1}$}}
  child {node {$t_1$}
  child {node {$\cbz_{v_2}$}}
  child {node {$t_2$}  
  child {node {$\cbz_{v_3}$}}
  child {node {$\cbz_{v_4}$}}}}
  ;

\draw[->] (8,5) -- (10, 5);
\node[draw=none] at (5,5.8) {cost$=4\|\cbz_{v_1}\vee \cbz_{v_2}\vee\cbz_{v_3}\vee \cbz_{v_4}\|_1$};  
\node[draw=none] at (12,6.5) {cost$=2\|\cbz_{v_1}\vee \cbz_{v_2}\vee\cbz_{v_3}\vee \cbz_{v_4}\|_1$};  
\node[draw=none] at (15.2,5.1) {cost$=2\|\cbz_{v_2}\vee\cbz_{v_3}\vee \cbz_{v_4}\|_1$}; 
\node[draw=none] at (15.5,4.2) {cost$=2\|\cbz_{v_3}\vee \cbz_{v_4}\|_1$}; 
\end{tikzpicture}
}
\caption{The transformation from Lemma~\ref{lem:binary}: the node $v$ of high degree is transformed into a binary tree of total cost less than twice the original cost: $2\|\cbz_{v_1}\vee \cbz_{v_2}\vee\cbz_{v_3}\vee \cbz_{v_4}\|_1 + 2\|\cbz_{v_2}\vee\cbz_{v_3}\vee \cbz_{v_4}\|_1+2\|\cbz_{v_3}\vee \cbz_{v_4}\|_1\leq 6\|\cbz_{v_1}\vee \cbz_{v_2}\vee\cbz_{v_3}\vee \cbz_{v_4}\|_1\leq 2 \cdot 4\|\cbz_{v_1}\vee \cbz_{v_2}\vee\cbz_{v_3}\vee \cbz_{v_4}\|_1$.}
\end{center}
\end{figure}
We are able, however, to give another algorithm, based on ternary complete trees, with a slightly better constant in the approximation ratio.\footnote{We use $\log$ to denote the logarithm base $2$.} 
\begin{theorem}
\label{thm:binaryApprox}
There exists an algorithm which runs in time $O(m+n)$ and achieves an approximation guarantee of $\frac{3\log{m}}{\log{3}}$ for the \bobby{} problem, i.e., the output $T$ of the algorithm satisfies
\[
c(T, \bY ) \leq \frac{3\log{m}}{\log{3}}\cdot \min_{T\in \cT} c(T, \bY ) \, .
\]
\end{theorem}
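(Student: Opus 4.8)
The plan is to sidestep any data-dependent construction: I will show that the fixed complete ternary tree on $m$ leaves already comes within the stated factor of the optimum, and it is produced in $O(m)$ time (plus one $O(n)$ pass over $\bY$ if one insists on touching the input at all). Concretely, the algorithm builds the complete ternary tree $T_3$ with $m$ leaves, i.e.\ of depth $k:=\lceil\log_3 m\rceil$, assigns the $m$ labels to its leaves in an arbitrary order, and returns $T_3$.

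For the upper bound I would bound $c(T_3,\bY)$ level by level. By Proposition~\ref{prop:cost_decomp_training_set}, $c(T_3,\bY)=n+\sum_{v\in V_{T_3}}\|\cbz_v\|_1\cdot\deg_v$, and every internal node of $T_3$ has degree at most $3$, so
\[
c(T_3,\bY)\ \le\ n+3\sum_{i=0}^{k-1}\ \sum_{u:\,\depth(u)=i}\|\cbz_u\|_1 .
\]
For a fixed level $i$ the subtrees rooted at the nodes of that level partition the set of leaves, and $\|\cbz_u\|_1=\bigl\|\bigvee_{\ell_j\in L(u)}\cby_j\bigr\|_1\le\sum_{\ell_j\in L(u)}\|\cby_j\|_1$; summing over $u$ at level $i$ gives $\sum_{u:\,\depth(u)=i}\|\cbz_u\|_1\le\sum_{j=1}^m\|\cby_j\|_1=:S$. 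Plugging this in for each of the $k$ internal levels yields $c(T_3,\bY)\le n+3kS$.

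For a matching lower bound I would show $\min_{T\in\cT}c(T,\bY)\ge n+S$ for every tree. Fix a training example $\by_i$ and a relevant label $j$ (i.e.\ $y_{i,j}=1$): the leaf $\ell_j$ is a non-root node, and since $\ell_j\in L(\pa{\ell_j})$ we have $z_{i,\pa{\ell_j}}\ge y_{i,j}=1$, while these leaves are distinct for distinct $j$; hence $c(T,\by_i)=1+\sum_{v\in V_T\setminus r_T}z_{i,\pa v}\ge 1+\|\by_i\|_1$, and summing over $i$ gives $c(T,\bY)\ge n+S$ (equivalently, combine Proposition~\ref{prop:cost_decomp_training_set} with the left inequality of Proposition~\ref{prop:node_Hamming_weight_interval}). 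Combining the two bounds,
\[
\frac{c(T_3,\bY)}{\min_{T\in\cT}c(T,\bY)}\ \le\ \frac{n+3kS}{n+S}\ \le\ 3k ,
\]
where the last step is the elementary inequality $n+3kS\le 3kn+3kS$, valid since $k\ge1$. When $m$ is a power of $3$ this is exactly $3\log_3 m=\tfrac{3\log m}{\log 3}$, which is the claim; and constructing and labelling $T_3$ takes $O(m)$ time, so the total running time is $O(m+n)$.

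The one delicate point — and the step I expect to be the only real obstacle — is getting the constant to be precisely $\tfrac{3\log m}{\log 3}$, rather than merely $3\lceil\log_3 m\rceil$, when $m$ is not a power of $3$. The remedy is to sharpen the level-by-level bound: at level $i$ one also has $\sum_{u:\,\depth(u)=i}\|\cbz_u\|_1\le t_i\cdot n$, where $t_i\le 3^i$ is the number of internal nodes at that level and the bottom level has $t_{k-1}<3^{k-1}$, so $c(T_3,\bY)\le n+3\sum_i\min\{t_i n,\,S\}$; the geometric prefix $\sum_{i:\,t_i n\le S}t_i n$ telescopes to at most $\tfrac32 S$, and a short case split on the size of $S$ relative to $n$ then yields $c(T_3,\bY)\le\tfrac{3\log m}{\log 3}(n+S)$ for all but finitely many tiny $m$ (where a hand-picked small tree is checked directly; assigning the heaviest labels to the shallowest leaves, doable in $O(m+n)$ by counting sort since every $\|\cby_j\|_1\in\{0,\dots,n\}$, further tightens these cases). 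The three structural ingredients above are each only a few lines; the constant-chasing in this last paragraph is where the care is needed.
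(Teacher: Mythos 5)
Your proposal is correct and follows essentially the same route as the paper: build a complete ternary tree of depth $\lceil\log_3 m\rceil$ with labels assigned arbitrarily, upper-bound its cost by $n+3\lceil\log_3 m\rceil\sum_{i=1}^m\|\cby_i\|_1$ via Proposition~\ref{prop:cost_decomp_training_set}, and compare against the lower bound $\min_{T\in\cT}c(T,\bY)\ge n+\sum_{i=1}^m\|\cby_i\|_1$. Your final paragraph of constant-chasing only addresses the ceiling-versus-$\log_3 m$ rounding issue, which the paper's own proof silently ignores (as written it yields $3\lceil\log_3 m\rceil$ rather than exactly $\tfrac{3\log m}{\log 3}$), so no further work is needed beyond the core argument you already have.
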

\begin{proof}
The algorithm constructs in linear time a complete ternary tree $T$ of depth $\ceil{\log_3{m}}$, and assigns the $m$ vectors $\cby_i$ to the leaves arbitrarily. From the definition of the cost function we have that for every tree $T^*\colon c(T^*, \bY)\geq n+\sum_{i=1}^m\|\cby_i\|_1$. On the other hand, from Proposition~\ref{prop:cost_decomp_training_set} we have that $c(T, \bY)\leq n+3\ceil{\log_3{m}}\sum_{i=1}^m \|\cby_i\|_1$, which completes the proof.
\end{proof}
We remark that any improvement of the approximation ratio of Theorem~\ref{thm:binaryApprox} would solve an open problem.
Indeed, since the proof of Lemma~\ref{lem:binary} is constructive and efficient, any $o(\log{m})$-approximation algorithm for the \bobby{} problem would imply an $o(\log{m})$-approximation of an optimal binary tree, and this would improve the best known approximation ratio for the binary merging problem.

\subsection{Multi-class case}
\label{sec:multi_class}

In the multi-class case, we have $\sum_{j=1}^m y_{i,j} = 1$ for each $\by_i$ in $\bY$. 
For ease of exposition, we assume that the columns $\cby_1,\ldots,\cby_m$ are sorted such that $0<p_1\leq\ldots\leq p_m$. 

Remark that for trees of a fixed degree $\lambda$ for all internal nodes, the optimal solution is the $\lambda$-ary Huffman tree. 
Here, we do not have this restriction and have different costs for nodes of different degrees, which makes the problem more difficult. 
Nevertheless, we give two efficient algorithms which find almost optimal solutions for every instance of the multi-class \bobby~problem. 
Namely, these algorithms find a solution within a small additive error. 
Moreover, these algorithms run in linear time $O(n+m)$. 

We will use the entropy function defined as $H(p_1,\ldots,p_k) = \sum_{i=1}^{k}p_j\log (1/p_j)$, for $k\geq1$, and $p_1,\ldots,p_k>0$.\footnote{For ease of exposition, we do not require the arguments of the entropy function to sum up to $1$.}
We will use the fact that for $p_1+\ldots+p_k\leq 1, H(p_1,\ldots,p_k)\leq \log{k}$ (this follows from Jensen's inequality).
We will also make use of the following corollary of Jensen's inequality.
\begin{proposition}
\label{prop:entropy}
Let $k\geq1$, and $p_1,\ldots,p_k>0$. Let $p=\sum_{i=1}^k p_i$. Then
\begin{align*}
H(p) \geq H(p_1,\ldots,p_k)-p\log{k} \; .
\end{align*}
\end{proposition}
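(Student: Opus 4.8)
The plan is to reduce the claimed inequality to a direct application of Jensen's inequality, exactly in the spirit of the ``log-sum'' / grouping property of entropy. Write $p = p_1 + \ldots + p_k$ and introduce the normalized weights $q_i = p_i / p$, so that $q_1 + \ldots + q_k = 1$. The key algebraic identity I would first establish is
\begin{align*}
H(p_1,\ldots,p_k) = \sum_{i=1}^k p_i \log\frac{1}{p_i} = p\log\frac{1}{p} + p\sum_{i=1}^k q_i \log\frac{1}{q_i} = H(p) + p \cdot H(q_1,\ldots,q_k) \, ,
\end{align*}
which follows by substituting $p_i = p q_i$ and expanding $\log\frac{1}{p_i} = \log\frac1p + \log\frac{1}{q_i}$. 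Here $H(p) = p\log(1/p)$ is the entropy function on a single argument, consistent with the paper's (unnormalized) definition.

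Next I would bound the term $H(q_1,\ldots,q_k)$. Since $q_1 + \ldots + q_k = 1$, the paper has already noted (as the consequence of Jensen cited just before the proposition) that $H(q_1,\ldots,q_k) \le \log k$. Plugging this into the identity gives
\begin{align*}
H(p_1,\ldots,p_k) = H(p) + p\, H(q_1,\ldots,q_k) \le H(p) + p\log k \, ,
\end{align*}
and rearranging yields $H(p) \ge H(p_1,\ldots,p_k) - p\log k$, which is exactly the claim. One small point to verify along the way: the hypotheses $p_i > 0$ guarantee $q_i \in (0,1]$ and $p \in (0,k]$, so all the logarithms are well-defined and the grouping identity is valid with no degenerate terms; I would state this explicitly.

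I do not anticipate a genuine obstacle here — the only thing to be careful about is matching the paper's convention that the arguments of $H$ need not sum to $1$, so that one cannot simply quote a textbook ``chain rule for entropy'' verbatim but must carry the normalizing factor $p$ through by hand as above. The cited bound $H(q_1,\ldots,q_k)\le\log k$ for arguments summing to one is the only external fact invoked, and it is already available in the text.
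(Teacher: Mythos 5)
Your proof is correct, and it takes a slightly different route than the paper. The paper's own argument is a single direct application of Jensen's inequality to the concave function $x\log(1/x)$ at the unnormalized values $p_1,\ldots,p_k$ with uniform weights, giving $H(p_1,\ldots,p_k)\le k\cdot\frac{p}{k}\log\frac{k}{p}=H(p)+p\log k$ in one line. You instead establish the exact grouping identity $H(p_1,\ldots,p_k)=H(p)+p\,H(q_1,\ldots,q_k)$ with $q_i=p_i/p$, and then invoke the bound $H(q_1,\ldots,q_k)\le\log k$ for arguments summing to one, which the paper states (as a Jensen consequence) just before the proposition. Both routes rest on the same convexity fact, so neither is more elementary, but yours isolates the slack in the inequality as exactly $p\bigl(\log k-H(q_1,\ldots,q_k)\bigr)$, making transparent that equality holds precisely when all $p_i$ are equal, while the paper's version is shorter and avoids introducing the normalized weights. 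Your explicit check that $p_i>0$ keeps every logarithm finite, and your note that the paper's unnormalized convention prevents quoting a textbook chain rule verbatim, are exactly the right points of care.
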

\begin{proof}
Since $x\log\left(\frac{1}{x}\right)$ is concave for $x>0$, by Jensen's inequality we have that:
\begin{align*}
H(p_1,\ldots,p_k)&=\sum_{i=1}^{k}p_i\log\left(\frac{1}{p_i}\right) \\
&\leq k\cdot\left(\frac{p}{k}\right)\log\left(\frac{k}{p} \right)\\
&=p\left(\log\left(\frac{1}{p}\right)+\log{k}\right)\\
&=H\left(p\right)+p\log{k} \; .
\end{align*}
\end{proof}

We start by showing a lower bound for the multi-class case.
\begin{lemma}
\label{lemma:lower_bound}
Let $\bY$ be an instance of the multi-class case. The cost of any tree $T$ for $\bY$ is at least
\begin{align*}
c( T, \bY) \geq n+\frac{3n}{\log{3}}\cdot H(p_1,\ldots,p_m) \; .
\end{align*}
\end{lemma}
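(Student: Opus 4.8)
The plan is to reduce the statement to a degree-weighted Shannon-type entropy inequality and then prove that inequality by a telescoping argument over the tree, with Proposition~\ref{prop:entropy} supplying the per-node estimate.

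\textbf{Step 1: rewrite the cost in terms of leaf masses.} In the multi-class case each example activates exactly one leaf, so $\|\cbz_v\|_1=\sum_{\ell_j\in L(v)}\|\cby_j\|_1$, hence $p_v=\|\cbz_v\|_1/n=\sum_{\ell_j\in L(v)}p_j=:q_v$, and in particular $q_{r_T}=1$ and $q_v=\sum_{c\in\childs{v}}q_c$ for every internal node $v$. By Proposition~\ref{prop:cost_decomp_training_set}, $c(T,\bY)=n+\sum_{v\in V_T}\|\cbz_v\|_1\deg_v=n+n\sum_{v\in V_T}q_v\deg_v$, where only internal nodes contribute. So it suffices to show $\sum_{v\text{ internal}}q_v\deg_v\ge\frac{3}{\log3}\,H(p_1,\ldots,p_m)$. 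We may assume $p_j>0$ for all $j$ (deleting a never-occurring label does not increase $c(T,\bY)$ and leaves $H$ unchanged).

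\textbf{Step 2: the per-node step and telescoping.} For an internal node $v$ with children $c_1,\ldots,c_{\deg_v}$, apply Proposition~\ref{prop:entropy} to the numbers $q_{c_1},\ldots,q_{c_{\deg_v}}$, whose sum is $q_v$: this gives $q_v\log(1/q_v)\ge\sum_{c\in\childs{v}}q_c\log(1/q_c)-q_v\log\deg_v$, i.e.
\[
q_v\log\deg_v\ \ge\ \sum_{c\in\childs{v}}q_c\log(1/q_c)\ -\ q_v\log(1/q_v).
\]
Writing $n_v:=q_v\log(1/q_v)$ and summing over all internal $v$, the right-hand side telescopes: every non-root node appears exactly once as a child of some (internal) node, and every internal node appears once with a minus sign, so the total collapses to $\sum_{v\text{ leaf}}n_v-n_{r_T}$; since $n_{r_T}=1\cdot\log 1=0$ and $\sum_{v\text{ leaf}}n_v=\sum_j p_j\log(1/p_j)$, we obtain $\sum_{v\text{ internal}}q_v\log\deg_v\ \ge\ H(p_1,\ldots,p_m)$.

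\textbf{Step 3: from $\log\deg_v$ to $\deg_v$.} Use the elementary inequality $\tfrac{\log d}{d}\le\tfrac{\log3}{3}$, valid for every integer $d\ge1$ (the real function $x\mapsto(\log x)/x$ is unimodal with maximum at $x=e$, and among integers $d=3$ beats $d=2$ and $d=4$). Applied with $d=\deg_v$ it gives $\tfrac{\log3}{3}\,q_v\deg_v\ge q_v\log\deg_v$ for every node, and summing and combining with Step~2 yields $\tfrac{\log3}{3}\sum_{v}q_v\deg_v\ge H(p_1,\ldots,p_m)$. Multiplying by $3n/\log3$ and adding $n$ (using Step~1) gives the claimed bound $c(T,\bY)\ge n+\frac{3n}{\log3}H(p_1,\ldots,p_m)$. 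The only mildly delicate point is the bookkeeping in the telescoping sum — checking that the root term vanishes and, if one does not pre-prune zero-mass subtrees, handling them via the convention $0\log(1/0)=0$; everything else is routine. I expect the main conceptual content to be recognizing that Proposition~\ref{prop:entropy} together with the $(\log d)/d$ estimate is exactly what makes the telescoping close with the constant $3/\log 3$ (attained at ternary trees) rather than the weaker $2$.
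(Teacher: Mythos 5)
Your proof is correct, and it reaches Lemma~\ref{lemma:lower_bound} by a route that is organized differently from the paper's, even though the two key ingredients coincide. The paper proceeds by induction on the number of internal nodes: it picks a deepest internal node $v$ whose children are all leaves, collapses it (replacing the columns $\cby_1,\ldots,\cby_k$ by their OR in a modified instance $\bY'$), applies Proposition~\ref{prop:entropy} once to that node, and absorbs the node's cost using $k-\tfrac{3\log k}{\log 3}\ge 0$. You instead work globally: starting from the node-wise cost decomposition of Proposition~\ref{prop:cost_decomp_training_set} and the multi-class identity $q_v=\sum_{c\in\childs{v}}q_c$, you apply Proposition~\ref{prop:entropy} at \emph{every} internal node simultaneously and telescope, isolating the clean intermediate inequality $\sum_{v}q_v\log\deg_v\ge H(p_1,\ldots,p_m)$ (a Kraft/source-coding-type bound for mixed-degree trees), and only then trade $\log\deg_v$ for $\deg_v$ via $\tfrac{\log d}{d}\le\tfrac{\log 3}{3}$ --- which is exactly the same numerical fact the paper uses in the form $k-\tfrac{3\log k}{\log 3}\ge 0$. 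Your version buys a non-inductive argument that never modifies the instance and makes explicit where the constant $3/\log 3$ (ternary optimality) enters; the paper's induction avoids the telescoping bookkeeping (root term, degree-one nodes, zero-mass leaves), all of which you handle correctly since $q_{r_T}=1$ in the multi-class case and the paper's sorting assumption gives $p_j>0$.
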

\begin{proof}%
We prove this Lemma by induction on the number of inner nodes of $T$. If $T$ has only one inner node (the root), then 
\begin{align*}
c(T,\bY)=n+nm\geq n+n\cdot \frac{3\log{m}}{\log{3}}\geq n+\frac{3n}{\log{3}}\cdot H(p_1,\ldots,p_m) \; ,
\end{align*}
because $\log{m}\geq H(p_1,\ldots,p_m)$ for every integer $m\geq 1$.

Now assume $T$ has more than one inner nodes.
Consider an inner node $v$ of $T$ on the longest distance from the root. All children of $v$ are leaves. W.l.o.g. assume that the children of $v$ are $\cby_1,\ldots,\cby_k$ for $k\geq 2$. In the multi-class case we have that $\|\cbz_v\|_1=n \cdot \sum_{i=1}^k p_i$, and the cost $c(v)=\deg_v \cdot n \cdot \sum_{i=1}^k p_i = kn \cdot \sum_{i=1}^k p_i$. Now let $T'$ be the tree $T$ with the children of $v$ removed (while keeping the label $\cbz_v$ of the new leaf $v$). Then $c(T, \bY)=kn \cdot \sum_{i=1}^k p_i + c(T',\bY')$ where $\bY'$ derived from $\bY$ by replacing the columns $\cby_1,\ldots,\cby_k$ by the column $\cby_1\vee\ldots\vee\cby_k$. By the induction hypothesis, $c(T', \bY')\geq n+\frac{3n}{\log{3}}\cdot H(\sum_{i=1}^k p_i,p_{k+1},\ldots,p_m)$. Let $p=\sum_{i=1}^k p_i$. Then we have that
\begin{align*}
c(T, \bY)&=kn p + c(T', \bY')\\
&\geq n+knp + \frac{3n}{\log{3}}\cdot H\left(p,p_{k+1},\ldots,p_m\right) \\
&= n+knp + \frac{3n}{\log{3}}\cdot p\log\left(\frac{1}{p}\right)+\frac{3n}{\log{3}}\cdot H(p_{k+1},\ldots,p_m) \\
&\geq n+knp + \frac{3n}{\log{3}}\cdot (H(p_1,\ldots,p_k)-p\log{k})+\frac{3n}{\log{3}}\cdot H(p_{k+1},\ldots,p_m) \\
&=n+knp -\frac{3np\log{k}}{\log{3}}+\frac{3n}{\log{3}}\cdot H(p_{1},\ldots,p_m)\\
&=n+np\left(k -\frac{3\log{k}}{\log{3}}\right)+\frac{3n}{\log{3}}\cdot H(p_{1},\ldots,p_m)\\
& \geq n+\frac{3n}{\log{3}}\cdot H(p_{1},\ldots,p_m) \, ,
\end{align*}
where the second inequality is due to Proposition~\ref{prop:entropy}, and the last ineqaulity $k -\frac{3\log{k}}{\log{3}}\geq0$ holds for every integer $k\geq1$.
\end{proof}
As an upper bound, we prove that both a ternary Shannon code and a ternary Huffman tree give an almost optimal solution in the multi-class case. Both algorithms will construct a tree $T$ where each node (possibly except for one) has exactly three children. Remark that in the multi-class case the Hamming weight of each internal node is the sum of the Hamming weights of all leaves in its subtree (which follows from Proposition~\ref{prop:node_Hamming_weight_interval}). %

\begin{theorem}
\label{thm:multi-class_upper_bound}
A ternary Shannon code and a ternary Huffman tree for $p_1\leq\ldots\leq p_m$, which both can be constructed in time $O(n+m)$, solve the multi-class \bobby{} problem with an additive error of at most $3n$, i.e., the output $T$ of the algorithm satisfies
\[
c(T, \bY ) \leq \min_{T\in \cT} c(T, \bY ) + 3n \, .
\]
\end{theorem}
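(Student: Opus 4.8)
The plan is to pair the lower bound of Lemma~\ref{lemma:lower_bound} with a matching upper bound for each of the two constructed trees. By Lemma~\ref{lemma:lower_bound} we have $\min_{T\in\cT} c(T,\bY) \ge n + \frac{3n}{\log 3}\,H(p_1,\ldots,p_m)$, so it suffices to show that the output tree $T$ satisfies $c(T,\bY) \le n + \frac{3n}{\log 3}\,H(p_1,\ldots,p_m) + 3n$.

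The key step is a clean decomposition of the cost of a ternary tree in the multi-class case. As noted just before the theorem, in the multi-class case $\|\cbz_v\|_1 = \sum_{\ell_j\in L(v)} \|\cby_j\|_1$ for every node $v$ (a consequence of Proposition~\ref{prop:node_Hamming_weight_interval}), so Proposition~\ref{prop:cost_decomp_training_set} gives
\[
c(T,\bY) \;=\; n + \sum_{v\in V_T} \deg_v \sum_{\ell_j\in L(v)} \|\cby_j\|_1 \;=\; n + \sum_{j=1}^m \|\cby_j\|_1 \sum_{v\,\in\,\mathrm{anc}(\ell_j)} \deg_v \,,
\]
where $\mathrm{anc}(\ell_j)$ denotes the set of internal ancestors of the leaf $\ell_j$. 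If every internal node of $T$ has degree at most $3$, then $\sum_{v\in\mathrm{anc}(\ell_j)}\deg_v \le 3 d_j$, where $d_j$ is the depth of $\ell_j$, i.e., the length of the $j$-th codeword; hence $c(T,\bY)\le n + 3n\sum_{j=1}^m p_j d_j$, and everything reduces to bounding the expected codeword length $\sum_j p_j d_j$ for the two constructions.

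For the ternary Shannon code I would take $d_j = \lceil \log_3(1/p_j)\rceil$. Since $3^{-d_j}\le p_j$ and $\sum_j p_j = 1$ in the multi-class case, Kraft's inequality yields a ternary tree realizing these leaf depths, built in $O(m)$ time; and $\sum_j p_j d_j \le \sum_j p_j\bigl(\log_3(1/p_j)+1\bigr) = \frac{1}{\log 3}H(p_1,\ldots,p_m)+1$, which plugged into the displayed bound gives precisely $c(T,\bY)\le n + \frac{3n}{\log 3}H(p_1,\ldots,p_m)+3n$. For the ternary Huffman tree one first pads the symbol set with a single dummy of weight $0$ when needed, so that the greedy merges always combine three items, then builds the tree and deletes the dummy; the result has every internal node of degree $3$ except at most one of degree $2$, and by the classical optimality of $D$-ary Huffman coding its expected codeword length is no larger than that of the Shannon code, so the same chain of inequalities applies. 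The running time is $O(n+m)$: the weights $\|\cby_j\|_1$ are computed in $O(n)$, sorting them (integers in $\{0,\dots,n\}$) costs $O(n+m)$ by counting sort, and the Shannon tree and the Huffman tree (via the two-queue method on sorted weights) are then built in $O(m)$.

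I expect the bookkeeping around the $\deg_v$ factors to be the main thing to get right: the \AlgoPLT{} cost of a ternary tree is not its expected codeword length but essentially $3n$ times it, and converting between the cost of a node, its Hamming weight, and the codeword length of a leaf crucially uses the disjointness of the columns $\cby_j$ in the multi-class case. The other point needing a little care is the parity issue in $D$-ary Huffman coding — the possible extra dummy leaf is exactly why the constructed tree is described as having ``each node, possibly except one, of degree $3$'', and it is also why the guarantee is additive rather than exact (an optimal tree is free to use nodes of other degrees, which the entropy lower bound only accounts for up to the additive $3n$).
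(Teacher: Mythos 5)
Your proposal is correct and follows essentially the same route as the paper: combine the entropy lower bound of Lemma~\ref{lemma:lower_bound} with the classical bound $\sum_j p_j \lceil\log_3(1/p_j)\rceil \le \tfrac{1}{\log 3}H(p_1,\ldots,p_m)+1$ for ternary Shannon/Huffman codes, convert expected codeword length into \AlgoPLT{} cost via the degree-at-most-$3$ factor, and obtain the $O(n+m)$ running time by integer (counting/bucket) sorting of the weights $\|\cby_j\|_1$. The only differences are cosmetic: you derive the codeword-length bounds and the cost decomposition explicitly where the paper cites Cover and Thomas, and you build the Huffman tree by the two-queue method rather than the paper's single pass over a bucket array.
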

\begin{proof}%
Recall that for a leaf $i$ corresponding to the vector $\cby_i$, $\lenpath_i$ denotes the number of nodes on the path from $i$ to the root of the tree. Since in ternary Shannon and Huffman trees, the degree of each node is at most $3$, the total cost of these trees is at most $c(T, \bY )\leq n+ 3n\cdot\sum_{i=1}^m  (\lenpath_i-1) p_i$.

It is known that the value of the Shannon code is upper bounded by $v=\sum_{i=1}^m(\lenpath_i-1)p_i<\sum_{i=1}^{m}p_i\log_3\left(\frac{1}{p_i}\right)+1$ (see, e.g., Section~5.4 in~\citealt{CT12}). This implies that the cost of the corresponding ternary Shannon tree $T$ is
\begin{align*}
c(T,\bY)&\leq n+3n\cdot\sum_{i=1}^m (\lenpath_i-1) p_i\\
&< n+3n\left(\sum_{i=1}^{m}p_i\log_3\left(\frac{1}{p_i}\right)+1\right)\\
&=n+\frac{3n}{\log{3}}\cdot H(p_1,\ldots,p_m)+3n \; .
\end{align*}

It is also know that the weight of the ternary Huffman code is upper bounded by the same quantity $\sum_{i=1}^{m}p_i\log_3\left(\frac{1}{p_i}\right)+1$ (see, e.g., Section~5.8 in~\citealt{CT12}). Thus, the same upper bound holds for a ternary Huffman tree for the \bobby{} problem. This, together with Lemma~\ref{lemma:lower_bound}, implies approximation with an additive error of at most $3n$.

Now we show that in our case, PLT trees corresponding to Shannon and Huffman codes can be constructed even more efficiently. We assume a sparse representation of th input by the numbers $n_i=\|\cby_i\|_1$. From now on we will only store and work with $n_i$. Since all $n_i$ are integers from $1$ to $n$, we can sort them using Bucket sort in time $O(n+m)$. In Shannon code, the depth $| \Path{\ell_i} |=\ceil{\log_3(1/p_i)}$. We can construct the corresponding tree $T$ going from the root. We add internal nodes one by one, and connect leaves of the corresponding depth to this tree in the ascending order of $n_i$. This algorithm takes one pass over the sorted data, and also runs in time $O(n+m)$. Thus, the running time of the algorithm is $O(n+m)$.

For the Huffman code, we will also store a Bucket sorting of the current set of $n_i$. Namely, we introduce an array $s[1\ldots n]$ where $s[i]$ equals the number of vectors of Hamming weight $i$. Initially, this array can be constructed in time $O(n+m)$. In each iteration of the Huffman algorithm, we choose three smallest elements, and add a new one with a larger value of $n_i$. To implement all iterations of this procedure, it suffices to make only one pass through the array $s$ from the index $1$ to the index $n$. The running time of this algorithm is then again $O(n+m)$. This concludes the proof.
\end{proof}
This approximation is quite tight for the multi-class case, since it implies that $\tfrac{1}{n} \sum_{i=1}^n c(T, \by_i ) \leq \tfrac{1}{n} \sum_{i=1}^n c(T^*_{\bY}, \by_i) + 3$ which means that the difference with respect to the optimal tree is at most $3$ on average. Also, note that any algorithm for the multi-class case trivially gives an approximation for the $k$-sparse multi-label case. For example, the algorithm from Theorem~\ref{thm:multi-class_upper_bound} finds a solution for the $k$-sparse multi-label case of cost at most $k\cdot(\min_{T\in \cT} c(T, \bY ) + 3n)$.

Below we show that for the multi-class case there exists an optimal tree $T^*\in \cT$ which has a form similar to the tree from Theorem~\ref{thm:multi-class_upper_bound}: every internal node of $T^*$ also has $2$ or $3$ children nodes, and the order of the leaves in $T^*$ coincides with the order of the leaves in Theorem~\ref{thm:multi-class_upper_bound}.
\begin{lemma}
\label{lem:aux_1}
For the multi-class case, there exists an optimal tree $T^*_{\bY} \in \argmin_{T\in \cT} c(T, \bY )$ in which each internal node has $2$ or $3$ children. Moreover, in this tree the order of the leaves in descending order of their depths is the ascending order of their Hamming weights.
\end{lemma}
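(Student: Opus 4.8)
### Proof Plan

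The plan is to establish the two structural claims separately, using exchange/replacement arguments on an arbitrary optimal tree $T^*_{\bY}$. Throughout I will exploit the key feature of the multi-class case: by Proposition~\ref{prop:node_Hamming_weight_interval} (equality on the right for multi-class distributions), the Hamming weight of any internal node equals the \emph{sum} of the Hamming weights of the leaves in its subtree. This makes $c(v)=\deg_v\cdot\|\cbz_v\|_1$ a linear function of the leaf weights below $v$, which is what drives the exchange arguments.

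\textbf{Step 1: No internal node needs degree $\geq 4$.} First I would show that any optimal tree can be assumed to have all internal degrees in $\{2,3\}$. Take an optimal $T^*_{\bY}$ and suppose some internal node $v$ has $\deg_v = d \geq 4$ with children $v_1,\ldots,v_d$. I would replace $v$ by a small gadget: split the children into groups and introduce one new internal node, turning degree $d$ into degree $d-1$ or $d-2$ plus one extra internal node of degree $2$ or $3$. The cost change is computed via Proposition~\ref{prop:cost_decomp_training_set}: removing one child from $v$ decreases $c(v)$ from $dn(p_1+\cdots+p_d)$ to $(d-1)n(p_1+\cdots+p_d)$, i.e.\ a saving of $n(p_1+\cdots+p_d)$, while the new degree-$2$ node $t$ contributes $2n\cdot(\text{sum of the two smallest weights among the }p_i)$. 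Choosing to pull out the two \emph{smallest} children under $v$, the added cost $2n(p_a+p_b)$ is at most $\tfrac{4}{d}\cdot n(p_1+\cdots+p_d) \le n(p_1+\cdots+p_d)$ when $d\ge 4$, so the transformation does not increase the total cost. Iterating this (always operating on a highest-degree node, or proceeding top-down as in Lemma~\ref{lem:binary}) terminates with all internal degrees in $\{2,3\}$ and cost no larger than $c(T^*_{\bY},\bY)$, hence still optimal. I must be slightly careful that the inequality $2(p_a+p_b)\le (p_1+\cdots+p_d)$ for the two smallest of $d\ge4$ values is genuinely true — it is, since each of the $d\ge4$ terms is at least $p_b$ when $p_a\le p_b$ are the two smallest, so the sum is $\ge 4p_b \ge 2(p_a+p_b)$.

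\textbf{Step 2: Depth-monotone ordering of leaves.} Now, starting from an optimal tree with degrees in $\{2,3\}$, I would show we can additionally arrange that whenever leaf $i$ has strictly greater depth than leaf $j$ (i.e.\ $\lenpath_i > \lenpath_j$), we have $p_i \le p_j$ — equivalently, listing leaves by decreasing depth gives increasing Hamming weight. This is the classical exchange argument behind optimality of Huffman/Shannon codes: if $\lenpath_i > \lenpath_j$ but $p_i > p_j$, swap the two leaves. By Proposition~\ref{prop:cost_decomp_training_set}, the cost is $n + \sum_v \deg_v\|\cbz_v\|_1 = n + \sum_v \deg_v \cdot n\sum_{\ell_k\in L(v)} p_k$, so swapping $i$ and $j$ changes the cost by $n(p_i - p_j)\big(\sum_{v\in\Path{\ell_j}}\deg_v - \sum_{v\in\Path{\ell_i}}\deg_v\big)$ restricted to the nodes that are ancestors of exactly one of the two leaves; writing $D_i = \sum_{v\in\Path{\ell_i}\setminus\{\text{common ancestors}\}}\deg_v$ and similarly $D_j$, the change is $n(p_i-p_j)(D_j - D_i)$. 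Since leaf $i$ is strictly deeper and every internal degree is $\ge 2$, one gets $D_i \ge D_j$ (a longer suffix of path accumulates at least as large a sum of degrees — here I need the degrees-in-$\{2,3\}$ normalization, or at least that degrees are $\ge 2$, to make the path-length comparison translate to a degree-sum comparison). Then $(p_i-p_j)>0$ and $(D_j-D_i)\le 0$ makes the cost change $\le 0$, so the swap preserves optimality. Repeatedly applying such swaps sorts the leaves into the claimed order; termination follows because each strict swap strictly decreases a potential such as $\sum_i \lenpath_i \cdot (\text{rank of } p_i)$, or one can argue directly that only finitely many distinct leaf-orderings exist and cost is non-increasing.

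\textbf{Main obstacle.} The delicate point is Step 2's claim that strictly greater depth forces a weakly larger path-degree-sum, so that the swap is non-increasing in cost. With arbitrary large degrees this can fail (a single huge-degree node near the root can have bigger degree-sum on a short path than many small nodes on a long path), which is precisely why Step 1 — normalizing to degrees in $\{2,3\}$ — must come first and must be done in a way that does not disturb the argument. After normalization, a path of $\lenpath_i - 1$ internal nodes each of degree $\ge 2$ contributes degree-sum $\ge 2(\lenpath_i-1)$, while any path of $\lenpath_j - 1 < \lenpath_i - 1$ internal nodes of degree $\le 3$ contributes $\le 3(\lenpath_j - 1)$; these bounds are not by themselves enough when the depth gap is only $1$, so I would instead compare the two paths node-by-node from the point where they diverge, using that the deeper path simply has \emph{more} internal nodes past the divergence point and each contributes a positive degree — this local comparison is what actually yields $D_i \ge D_j$. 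Handling this comparison cleanly, and making sure the two normalization steps compose without one undoing the other, is the crux; the rest is bookkeeping with Proposition~\ref{prop:cost_decomp_training_set}.
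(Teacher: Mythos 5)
Your Step 1 is sound and is essentially the paper's argument: the paper moves the $k-2$ smallest children under one new node in a single step, while you peel off the two smallest into a degree-$2$ node, but both rest on the same multi-class fact (the new node's Hamming weight is the \emph{sum} of its children's weights) and the same averaging inequality. (Minor slip: your claim that the total $S$ of the $d$ children's weights satisfies $S\ge 4p_b$ need not hold since $p_a\le p_b$; what you need and what does hold is $S\ge p_a+3p_b\ge 2(p_a+p_b)$.)

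The genuine gap is in Step 2, and it is exactly at the point you yourself call the crux. The swap of leaves $i$ and $j$ changes the cost by $n(p_i-p_j)(D_j-D_i)$, where $D_i,D_j$ are the sums of the degrees of the ancestors strictly below the lowest common ancestor, so your argument needs the implication ``leaf $i$ deeper $\Rightarrow D_i\ge D_j$.'' Your proposed fix---comparing the two paths node-by-node past the divergence point---only uses that the deeper path has \emph{more} internal nodes, each of positive degree; that does not bound the sums in the right direction once degrees $2$ and $3$ are mixed. Concretely, after your normalization the deeper leaf may sit below four degree-$2$ ancestors ($D_i=8$) while the shallower leaf sits below three degree-$3$ ancestors ($D_j=9$); then a heavier leaf at the deeper position cannot be swapped down without strictly increasing the cost, so the exchange as you justify it breaks. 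What the exchange argument actually yields is monotonicity of the weights with respect to the multiplier $D_v=\sum_{u}\deg_u$ over the relevant ancestors, not with respect to depth, and bridging from the $D$-order to the depth order stated in the lemma requires an additional argument (e.g., showing such degree/depth configurations can be avoided in some optimal tree), which your proposal does not supply. For comparison, the paper disposes of this second claim in a single sentence asserting the same swap is non-increasing, so your write-up attempts more detail precisely where the paper is terse, but the justification you give for $D_i\ge D_j$ is not valid.
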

\begin{proof}%
Consider an optimal PLT tree~$T^*$. Consider a node~$v\in T^*$ having children ~$v_1,...,v_k,~k\geq 4$ (if no such node exist, $T^*$ already has the desired structure). Assume that $\|\cbz_{v_1}\|_1\leq\ldots\leq \|\cbz_{v_k}\|_1$. Consider the subtree~$T(v)$ rooted at the node~$v$: its root node has the cost~$c(v)=k\sum\limits_{i=1}^k \|\cbz_{v_i}\|_1$, and the whole subtree has the cost~$c(T(v), \bY)=k\sum\limits_{i=1}^k \|\cbz_{v_i}\|_1+\sum\limits_{i=1}^k c(T(v_i),\bY)$ where~$c(T(v_i),\bY)$ is the cost of the subtree rooted at~$v_i$.

We make the following transformation of the subtree rooted at~$v$: move the children nodes~$v_1,...,v_{k-2}$ under a new node~$u$, and make~$u$ a new child of~$v$ such that $v$ has three child nodes ($v_{k-1},v_{k-2},u$), and $u$ has $k-2$ child nodes ($v_1,...,v_{k-2}$). The cost of the new subtree rooted at $v$ is~$c(T'(v),\bY)=3\sum\limits_{i=1}^k \|\cbz_{v_i}\|_1+(k-2)\sum\limits_{i=1}^{k-2}\|\cbz_{v_i}\|_1+\sum\limits_{i=1}^k c(T(v_i),\bY)$.

Let $S=\sum\limits_{i=1}^k \|\cbz_{v_i}\|_1$. Then we have that $\sum\limits_{i=1}^{k-2}\|\cbz_{v_i}\|_1\leq \frac{k-2}{k}\cdot S$.
Now consider the difference of the costs of the transformed and original subtrees:
\begin{align*}
c(T'(v),\bY)-c(T(v),\bY)&=(3-k)\sum\limits_{i=1}^k \|\cbz_{v_i}\|_1+(k-2)\sum\limits_{i=1}^{k-2}\|\cbz_{v_i}\|_1\\
&\leq (3-k)S+\frac{(k-2)^2}{k}S \\
&= S\left(\frac{4}{k}-1\right)\, .
\end{align*}
Thus, the cost of the trasformed tree never exceeds the cost of the original tree for $k\geq 4$. We will apply this transformation to every node of degree~$\geq 4$. Since each such transformation decreases the total number of children of nodes with more that $3$ children, this process will eventually terminate. Thus, after a finite number of steps, we will get a tree where each node has degree $2$ or $3$.  

To prove the second part of the lemma, we observe that if the ascending Hamming weight order of the leaves does not match the  descending order of their depth, then we can swap two leaves without increasing the cost of the tree. Repeating this transformation a finite number of times we get the desired ordering of the leaves.
\end{proof}
Although Lemma~\ref{lem:aux_1} gives a characterization of the shape and order of an optimal tree, the number of trees of this form is still exponential in $m$.

\subsection{Nested multi-label case (Matryoshka label structure)}
\label{sec:nested}

In this section, we study the case where the labels have nested structure which is also known as Matryoshka structure. For $\cby,\cby'\in\{0,1\}^n$, we say $\cby\leq \cby'$ if $\forall i\in[n]\colon y_i\leq y'_i$. In this section we will assume that the $m$ vectors of $\bY$ satisfy $\cby_1\leq\ldots\leq \cby_m$. We also assume that $\cby_1$ contains at least one positive element: $\|\cby_1\|_1\geq1$. We start with two structural results for an optimal tree in this case.
\begin{lemma}
\label{lem:innerdegree}
Let $\bY$ be an instance of the nested multi-label case. There exists an optimal tree $T_{\bY}^* \in \argmin_{T \in \cT} c( T, \bY)$ where each node has at most one internal node among its children.
\end{lemma}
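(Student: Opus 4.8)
The claim is that for the nested (Matryoshka) multi-label case — where $\cby_1 \le \cby_2 \le \ldots \le \cby_m$ — there is an optimal tree in which every internal node has at most one internal node among its children. The plan is to start from an arbitrary optimal tree $T$ and perform a local rearrangement whenever some internal node $v$ has two or more internal children, showing this rearrangement does not increase the cost; then argue termination via a suitable potential.

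**The key structural observation.**

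The crucial feature of the nested case is that for any set of columns, the Hamming weight of their OR equals the maximum of their individual weights (since they form a chain). More generally, for any node $w$ in the tree, $\|\cbz_w\|_1 = \max\{\|\cby_j\|_1 : \ell_j \in L(w)\}$, i.e., the weight of a node is determined solely by the ``heaviest'' leaf in its subtree. This means that in any subtree we may, without loss of generality, assume the heaviest leaf is pushed as high as possible. So suppose $v$ has internal children $u_1, u_2$ (and possibly other, leaf children). Let $u_1$ be the child whose subtree contains the heaviest leaf among $L(u_1) \cup L(u_2)$ — so $\|\cbz_{u_1}\|_1 \ge \|\cbz_{u_2}\|_1$ (relabeling if needed). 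The idea is to detach the entire subtree rooted at $u_2$ and reattach it as a child of $u_1$ instead of $v$. This reduces $\deg_v$ by one and increases $\deg_{u_1}$ by one. The change in cost is: node $v$ loses $\|\cbz_v\|_1$ (its degree drops by one), node $u_1$ gains $\|\cbz_{u_1}^{\text{new}}\|_1$; but since $\|\cbz_{u_2}\|_1 \le \|\cbz_{u_1}\|_1$, attaching $u_2$'s subtree under $u_1$ does not change $\|\cbz_{u_1}\|_1$ at all (the heaviest leaf is still the same), and $\|\cbz_{u_1}\|_1 \le \|\cbz_v\|_1$. Hence the net change is $\|\cbz_{u_1}\|_1 - \|\cbz_v\|_1 \le 0$, and the costs of all other nodes are unchanged (the subtree $T(u_2)$ is carried along intact, and $\cbz_w$ for $w$ on the old path from $u_2$ to $v$ — namely just $v$ — is the only weight affected). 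So the cost does not increase.

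**Termination.**

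After this move, $v$ has one fewer internal child. I would take as potential $\Phi(T) = \sum_{v \in V_T} \max(0, (\text{number of internal children of } v) - 1)$, or more simply the total number of (internal node, internal child) incidences; each move strictly decreases this by one while never creating a new internal-internal incidence elsewhere (the subtree $T(u_2)$ is moved wholesale, so incidences inside it are preserved, and $u_1$ gains exactly one internal child $u_2$ while $v$ loses one). Actually one must be slightly careful: $u_1$'s count of internal children goes up by one, $v$'s goes down by one, so the simple incidence count is unchanged — I should instead use a lexicographic / weighted potential, e.g. assign each internal node $w$ the weight $2^{-\depth(w)}$ times its number of internal children, which strictly decreases since the internal child is pushed to strictly greater depth. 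Alternatively, argue by a cleaner double induction on (number of internal nodes, then sum of depths of internal nodes). This bookkeeping for termination is the one place requiring care, but it is routine; I would finish by noting that when no such move applies, every internal node has at most one internal child, and the cost is still optimal since it never increased.

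**Main obstacle.**

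I expect the substantive content to be entirely in the Hamming-weight observation that $\|\cbz_{u_1}\|_1$ is unchanged when we graft the lighter subtree $T(u_2)$ underneath it — this is exactly where nestedness is used and without it the whole argument collapses. The remaining risk is the termination argument: one must choose the potential so that the grafting move strictly decreases it, which (as noted) the naive incidence count fails to do, so a depth-weighted potential or a careful nested induction is needed.
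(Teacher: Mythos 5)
Your argument is essentially the paper's proof: the same local move (graft the lighter internal child's subtree under the heavier internal sibling), with the same cost accounting --- the parent's cost drops by $\|\cbz_v\|_1$ while the heavier child's cost rises by only its own unchanged Hamming weight, which is at most $\|\cbz_v\|_1$, nestedness being used exactly where you say it is. Your extra care on termination is justified (the paper simply asserts that each move reduces the number of nodes with two internal children, which the naive count does not guarantee), and your depth-weighted potential or sum-of-depths argument closes that small gap correctly.
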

\begin{proof}
Consider an optimal tree $T \in \argmin_{T'\in \cT} c(T', \bY)$.
Let $v$ be a node with two internal nodes $v_1$ and $v_2$ among its children. Let $v_3,\ldots,v_k$ be the remaining children of $v$ (leaves and internal nodes). W.l.o.g. assume that $\|\cbz_{v_1}\|_1 \leq\|\cbz_{v_2}\|_1$. We construct a tree $T'$ such that $c(T', \bY) \leq c(T, \bY)$, and $T'$ has fewer nodes with two inner nodes among their children. Repeatedly applying this procedure we will get an optimal tree without nodes with more than one inner node among its children.

We define the tree $T'$ as $T$ where the node $v_1$ is no longer a child of $v$ but rather a child of $v_2$. It is easy to see that the only two nodes which change their costs after this transformation are $v$ and $v_2$. The cost of $v$ is decreased by $\|\cbz_v \|_1$ (because the $\deg_v$ is decreased by $1$) while the cost of $v_2$ is increased by $\|\cbz_{v_{2}}\|_1$ (because the $\deg_{v_2}$ is increased by $1$). Note that $\|\cbz_{v_{2}}\|_1\leq \|\cbz_{v}\|_1$ (since $v$ is the parent of $v_2$ in $T$). This completes the proof.

\end{proof}

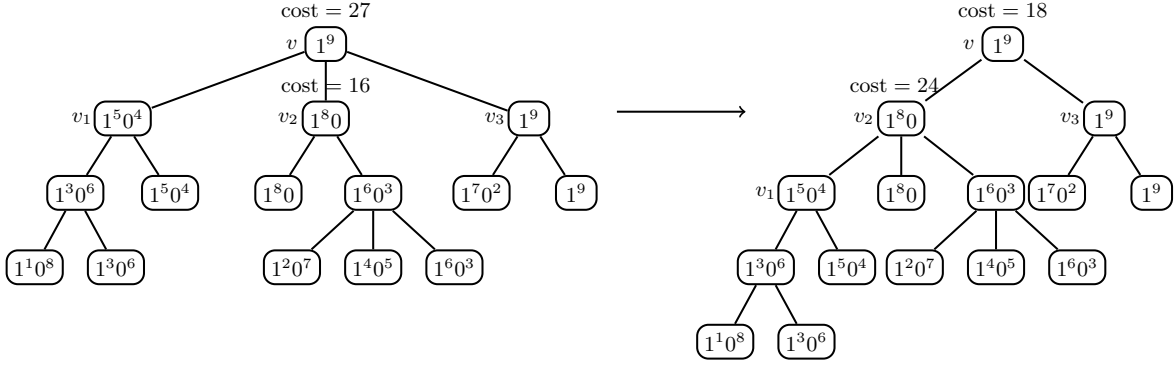
\begin{figure}[ht]
\scalebox{0.9}{
\begin{tikzpicture}%
[every node/.style={draw=black,rectangle,minimum width=6mm,minimum height=5mm,black,inner sep=.3mm,rounded corners=1ex,draw,font=\footnotesize,inner sep=3pt},
level 1/.style={sibling distance=30mm},
level 2/.style={sibling distance=14mm}, 
level 3/.style={sibling distance=12mm}, 
level distance=11mm,line width=.3mm,
scale=1,transform shape]

\node at (5,6) {$1^9$} [grow=down]
  child {node {$1^50^4$}
    child {node {$1^30^6$}  
    child {node {$1^10^8$}} 
    child {node {$1^30^6$}}
    } 
    child {node {$1^50^4$}  }   
  }
  child {node {$1^80$}
  child {node {$1^80$}}
  child {node {$1^60^3$}
  child {node {$1^20^7$}}
  child {node {$1^40^5$}}
  child {node {$1^60^3$}}
  }
  }
  child {node {$1^9$}
  child {node {$1^70^2$}}
  child {node {$1^9$}}
  }
  ;
  
  \node at (15,6) {$1^9$} [grow=down]
  
  child {node {$1^80$}
  child {node {$1^50^4$}
    child {node {$1^30^6$}  
    child {node {$1^10^8$}} 
    child {node {$1^30^6$}}
    } 
    child {node {$1^50^4$}  }   
  }
  child {node {$1^80$}}
  child {node {$1^60^3$}
  child {node {$1^20^7$}}
  child {node {$1^40^5$}}
  child {node {$1^60^3$}}
  }
  }
  child {node {$1^9$}
  child {node {$1^70^2$}}
  child {node {$1^9$}}
  }
  ;

\draw[->] (9.3,5) -- (11.2, 5);

\node[draw=none] at (5,6.5) {$\text{cost}=27$};  
\node[draw=none] at (5,5.4) {$\text{cost}=16$};  

\node[draw=none] at (4.5,6) {$v$};  
\node[draw=none] at (1.4,4.9) {$v_1$};  
\node[draw=none] at (4.45,4.9) {$v_2$};  
\node[draw=none] at (7.5,4.9) {$v_3$};

\node[draw=none] at (15,6.5) {$\text{cost}=18$};  
\node[draw=none] at (13.4,5.4) {$\text{cost}=24$};  

\node[draw=none] at (14.5,6) {$v$};  
\node[draw=none] at (11.5,3.8) {$v_1$};
\node[draw=none] at (12.95,4.9) {$v_2$}; 
\node[draw=none] at (16,4.9) {$v_3$}; 
\end{tikzpicture}
}
\caption{An~example of the transformation from Lemma~\ref{lem:innerdegree}. The total cost of the two nodes $v$ and $v_2$ which change their costs under this transformation is reduced from $43$ to $42$.}
\end{figure}

\begin{lemma}
\label{lem:matryoshka}
Let $\bY$ be an instance of the nested multi-label case. There exists a $k\ge 1$, indices $1=i_1<\ldots<i_{k}=m$, and an optimal PLT tree $T^*$ which has the following form:
\begin{itemize}
\item the internal nodes are denoted by $v_1,\ldots, v_{k-1}$, the root is $v_{k-1}$, and the leaves are $\cby_1,\ldots,\cby_m\; ;$
\item for $k>j\geq 2$, the internal node $v_j$ has children $v_{j-1}$ and $\cby_{i_j +1},\ldots,\cby_{i_{j+1}} \; ;$
\item the node $v_1$ has children $\cby_{i_1},\ldots,\cby_{i_2}\; .$
\end{itemize}
\end{lemma}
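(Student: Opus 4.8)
The plan is to build on Lemma~\ref{lem:innerdegree}, which already guarantees an optimal tree $T^*$ in which every node has at most one internal node among its children. Such a tree has a ``caterpillar'' shape: the internal nodes form a single path $v_1, v_2, \ldots, v_{k-1}$ (with $v_{k-1}$ the root), and every leaf hangs off exactly one of these internal nodes. So the first step is to observe that the structure claimed in the lemma is precisely the structure forced by Lemma~\ref{lem:innerdegree}, once we also specify \emph{which} leaves hang off which $v_j$. What remains is to show that, among all caterpillars, there is an optimal one in which the leaves are distributed along the path in sorted order of their Hamming weights (equivalently, by the nesting order $\cby_1 \le \cby_2 \le \cdots \le \cby_m$): the lightest block $\cby_{i_1},\ldots,\cby_{i_2}$ sits at the bottom node $v_1$, the next block at $v_2$, and so on, with the heaviest leaves at the root.

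The key step is an exchange argument. Fix an optimal caterpillar from Lemma~\ref{lem:innerdegree}. First note that in the nested case $\|\cbz_{v_j}\|_1 = \|\cby_m\|_1 = \|\cby_{\max}\|_1$ is the \emph{same} for every internal node that lies above the leaf $\cby_m$ — in fact, since $\cby_1 \le \cdots \le \cby_m$, for any node $v$ we have $\cbz_v = \bigvee_{\cby_j \in L(v)} \cby_j = \cby_{j^*}$ where $j^*$ is the largest leaf index in $L(v)$. So the cost of internal node $v_j$ is $\deg_{v_j} \cdot \|\cby_{t_j}\|_1$ where $t_j = \max\{ j' : \cby_{j'} \in L(v_j)\text{ or below}\}$. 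Now suppose some leaf $\cby_a$ hangs off $v_j$ and some leaf $\cby_b$ hangs off $v_{j'}$ with $j < j'$ (so $v_j$ is strictly below $v_{j'}$) but $\|\cby_a\|_1 > \|\cby_b\|_1$ — an ``inversion''. I would swap $\cby_a$ and $\cby_b$: move $\cby_a$ up to $v_{j'}$ and $\cby_b$ down to $v_j$. The degrees of all internal nodes are unchanged, so only the Hamming weights $\|\cbz_{v_i}\|_1$ for $j \le i < j'$ can change, and by the formula above each of them is the max Hamming weight of the leaves at or below that node; replacing $\cby_a$ by the lighter $\cby_b$ in the bottom part can only decrease (or keep) each such max. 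Hence the swap does not increase the cost, and it reduces the number of inversions (or at least a suitable potential, e.g. the sum over leaves of leaf-depth weighted by Hamming-weight rank). Iterating, we reach an optimal caterpillar with the leaves sorted by Hamming weight along the path, which is exactly the claimed form; the indices $i_1 < \cdots < i_k$ simply record the block boundaries.

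The main obstacle — and the place to be careful — is making the exchange argument airtight when the swap involves leaves that are not in adjacent blocks, and in particular verifying that every intermediate internal node's Hamming weight is monotone under the swap and that the process terminates. The cleanest route is to restrict to swaps of leaves in \emph{adjacent} nodes $v_j, v_{j+1}$ of the caterpillar (any global inversion implies a local one), argue termination via the potential $\Phi = \sum_j j \cdot (\text{total Hamming weight of leaves at } v_j)$ or via a lexicographic/bubble-sort count, and note that a local swap changes at most the weight $\|\cbz_{v_j}\|_1$, which is the max of the (nonincreasing-under-swap) set of Hamming weights of leaves at or below $v_j$. One should also handle the degenerate possibilities ($k=1$, i.e. a star; or a block $[i_j+1, i_{j+1}]$ being empty) by allowing $k \ge 1$ and permitting trivial blocks, which is why the statement is phrased with a general $k$ rather than a fixed shape. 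Finally, I would double-check the indexing convention in the bullet list (the node $v_1$ gets leaves $\cby_{i_1},\ldots,\cby_{i_2}$ with $i_1 = 1$, and $v_j$ for $j \ge 2$ gets $\cby_{i_j+1},\ldots,\cby_{i_{j+1}}$, so the blocks partition $[m]$) to make sure it is consistent with ``lightest at the bottom, heaviest at the root.''
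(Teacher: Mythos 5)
Your proposal is correct and follows the same skeleton as the paper's proof: both invoke Lemma~\ref{lem:innerdegree} to reduce to the caterpillar shape (a single path of internal nodes $v_1,\ldots,v_{k-1}$ with leaves attached), and both rest on the observation that in the nested case $\cbz_v=\cby_{j^*}$, where $j^*$ is the largest index of a leaf in the subtree of $v$, so that the degrees fix the block sizes and only the assignment of leaves to blocks matters. The only difference is how sortedness is established. The paper does it in one shot: for a fixed block structure, the cost of $v_j$ under any assignment $\pi'$ equals $(i_{j+1}-i_j+1)\max_{1\le t\le i_{j+1}}\|\cby_{\pi'(t)}\|_1\ge(i_{j+1}-i_j+1)\|\cby_{i_{j+1}}\|_1$, i.e., every node's cost under $\pi'$ is already at least its cost under the identity, so no exchange process or termination argument is needed. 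You instead run a bubble-sort style exchange argument; this works (an inversion between non-adjacent blocks implies one between adjacent blocks, an adjacent swap changes only $\|\cbz_{v_j}\|_1$, which is a max and can only drop, and your potential $\Phi=\sum_j j\cdot(\text{weight at }v_j)$ strictly increases, giving termination), but it carries exactly the bookkeeping burden you flag, plus the cosmetic point that leaves with equal Hamming weight are identical columns (by nestedness) and may be relabeled to match the index order in the statement. In short: same decomposition and key structural facts; the paper's direct comparison against the identity assignment is a bit cleaner, your exchange argument is a valid and slightly more laborious alternative.
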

\begin{proof}
For $\bY = [\cby_1, \dots, \cby_m]$ such that $\cby_1\leq \ldots\leq \cby_m$, by Lemma~\ref{lem:innerdegree}, there exists an optimal tree $T^*$ where each internal node has at most one internal node among its children. Therefore, there exists some $k$, indices $1=i_1<\ldots<i_k=m$, and a permutation $\pi$ such that $T^*$ is as follows:
\begin{itemize}
\item the internal nodes are denoted by $v_1,\ldots, v_{k-1}$, the root is $v_{k-1}$, and the leaves are $\cby_1,\ldots,\cby_m\; ;$
\item for $k>j\geq 2$, the internal node $v_j$ has children $v_{j-1}$ and $\cby_{\pi(i_j +1)},\ldots,\cby_{\pi(i_{j+1})} \; ;$
\item the node $v_1$ has children $\cby_{i_1},\ldots,\cby_{i_2}\; .$
\end{itemize}
Thus, it suffices to show that the identity permutation $\pi=\id$ minimizes the cost of the tree $c(T^*,\bY)$. If $\pi=\id$, then the cost of the internal node $v_j$ is $c(v_j)_\id=\deg_{v_j}\|\cbz_{v_{i_{j+1}}}\|_1 =  (i_{j+1}- i_j+1)\|\cby_{i_{j+1}}\|_1$. Note that this holds for $v_1$ as well, since its children are $\cby_1,\ldots,\cby_{i_2}$. Now, assume that there exists a permutation $\pi'$ for which we get a smaller cost of the tree. Then, the cost $c(v_j)_{\pi'}$ of at least one internal node $v_j$ is smaller under the permutation $\pi'$: $c(v_j)_{\pi'}<c(v_j)_{\id}$ . Note that 
\[
c(v_j)_{\pi'}=\deg_{v_j}\|\cbz_{v_{i_{j+1}}}\|_1= (i_{j+1}- i_j+1)\max_{1\le k \le i_{j+1}} \|\cby_{\pi'(k)}\|_1\geq(i_{j+1}- i_j+1)\|\cby_{i_{j+1}}\|_1=c(v_j)_\id
\]
which finishes the proof.
\end{proof}
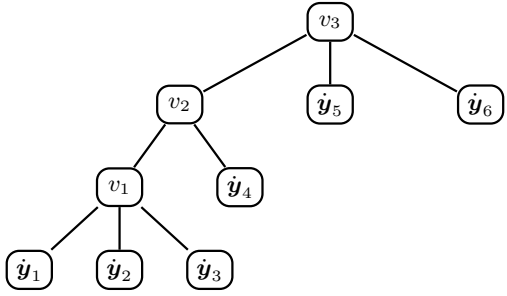
\begin{figure}[ht]
\begin{center}
\scalebox{1}{
\begin{tikzpicture}
[every node/.style={draw=black,rectangle,minimum width=6mm,minimum height=5mm,black,inner sep=.5mm,rounded corners=1ex,draw,font=\footnotesize,inner sep=3pt},
level 1/.style={sibling distance=20mm},
level 2/.style={sibling distance=16mm}, 
level 3/.style={sibling distance=12mm}, 
level distance=11mm,line width=.3mm,
scale=1,transform shape]

\node at (5,6) {$v_3$} [grow=down]
  child {node {$v_2$}
    child {node {$v_1$}  
    child {node {$\cby_1$}} 
    child {node {$\cby_2$}} 
    child {node {$\cby_3$}}
    } 
    child {node {$\cby_4$}  }   
  }
  child {node {$\cby_5$}
  }
  child {node {$\cby_6$}
  }
  ;
 
\end{tikzpicture}
}
\caption{An~example of an optimal solution from Lemma~\ref{lem:matryoshka}.}
\end{center}
\end{figure}

Note that by the nested property, an inner node $v_j$ for $j\geq 2$ has cost $c(v_j)=\deg_{v_j}\|\cbz_{v_j}\|_1 =  (i_{j+1}- i_j+1) \|\cby_{i_{j+1}}\|_1$. Thus, the problem of finding an optimal solution of the form guaranteed by Lemma~\ref{lem:matryoshka} can be stated as follows: Given a sequence of $m$ non-negative integers $A=(\|\cby_{1}\|_1,\ldots,\|\cby_{m}\|_1)$ such that $\|\cby_{1}\|_1\leq\ldots\leq \|\cby_{m}\|_1$, we need to partition $A$ into $k$ contiguous subsequences $S_1,\ldots,S_k$, in order to minimize the value of $|S_1|\cdot \max(S_1)+\sum_{j=2}^k (|S_j|+1) \cdot \max(S_j)$. 

Given the structural result of Lemma~\ref{lem:matryoshka}, one can find an optimal solution for the nested multi-label case by a simple dynamic programming algorithm in quadratic time $(m^2+mn)$. Instead, we will show that this problem can actually be solved in linear time $O(m)$, assuming a sparse representation of the input, for example as a sequence of the numbers $\|\cby_1\|_1\leq \ldots\leq \|\cby_m\|_1$. For this, we will reduce the problem to the concave least-weight sequence problem in time $O(m)$, and the latter problem can be solved in linear time $O(m)$~\citep{Wilber_1998}.
\begin{definition}[Concave least-weight sequence]
Let $n$ be an integer, and $w(i,j)$ be a real-valued function defined for integers $0\leq i, j \leq n$ with the property that $w(i_0,j_0)+w(i_1,j_1)\leq w(i_0,j_1)+w(i_1,j_0)$ for all $0\leq i_0<i_1<j_0<j_1\leq n$. The concave least-weight sequence problem is to find an integer $k\geq 1$ and a sequence of integers $0=\ell_0<\ell_1<\ldots<\ell_{k-1}<\ell_k=n$ such that $\sum_{j=0}^{k-1}w(l_i,l_{i+1})$ is minimized.
\end{definition}

Now we are ready to present the main result of this section. 
\begin{theorem}\label{thm:matryoshka}
There exists an $O(m)$ time algorithm that solves the nested multi-label \bobby{} problem exactly.
\end{theorem}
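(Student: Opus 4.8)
The plan is to build on the reformulation established just above: by Lemma~\ref{lem:matryoshka} it suffices to partition the sorted sequence $A=(a_1,\dots,a_m)$, where $a_i:=\|\cby_i\|_1$ and $a_1\le\dots\le a_m$, into contiguous blocks $S_1,\dots,S_t$ minimizing $|S_1|\max(S_1)+\sum_{p=2}^t(|S_p|+1)\max(S_p)$, the optimal \bobby{} value being $n$ plus this quantity. First I would encode a partition by its block boundaries $0=\ell_0<\ell_1<\dots<\ell_t=m$, with $S_p=\{\ell_{p-1}+1,\dots,\ell_p\}$; since $A$ is nondecreasing, $\max(S_p)=a_{\ell_p}$, so the objective becomes $\sum_{p=1}^{t} w(\ell_{p-1},\ell_p)$ for the weight function
\[
w(i,j)=\bigl(j-i+\IND{i\ge 1}\bigr)\,a_j,\qquad 0\le i<j\le m
\]
(its values for $i\ge j$ are irrelevant and may be set arbitrarily). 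Thus, up to the additive constant $n$, the nested multi-label \bobby{} problem is exactly the concave least-weight sequence problem for $w$.

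The key step is to verify that $w$ satisfies the concavity (quadrangle) inequality $w(i_0,j_0)+w(i_1,j_1)\le w(i_0,j_1)+w(i_1,j_0)$ for all $0\le i_0<i_1<j_0<j_1\le m$. Setting $c_i:=\IND{i\ge 1}\in\{0,1\}$, a short computation shows that $[w(i_0,j_0)+w(i_1,j_1)]-[w(i_0,j_1)+w(i_1,j_0)]$ equals
\[
\bigl(i_1-i_0+c_{i_0}-c_{i_1}\bigr)\,(a_{j_0}-a_{j_1})\,.
\]
The second factor is $\le 0$ since $A$ is sorted and $j_0<j_1$; the first factor is $\ge 0$ since $i_1-i_0\ge 1$ while $c_{i_0}-c_{i_1}\ge -1$; hence the product is $\le 0$, as required. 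I expect this to be the main obstacle: the ``$+1$'' that every block except $S_1$ pays --- coming from the extra internal child that each non-root Matryoshka node of Lemma~\ref{lem:matryoshka} carries --- makes $w$ slightly irregular at $i=0$, and one must check that the linear term $j-i$ dominates this correction so that concavity survives. The displayed identity is precisely that check; everything else is bookkeeping.

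Finally, $w(i,j)$ is evaluable in $O(1)$ time from the array $(a_1,\dots,a_m)$, which is exactly the assumed already-sorted sparse input, so the linear-time algorithm for the concave least-weight sequence problem~\citep{Wilber_1998} returns optimal boundaries $0=\ell_0<\dots<\ell_t=m$ in $O(m)$ time. Emitting the corresponding Matryoshka tree --- one internal node per block, chained as described in Lemma~\ref{lem:matryoshka} --- costs another $O(m)$ time, and Lemma~\ref{lem:matryoshka} certifies that this tree attains the global minimum of $c(T,\bY)$ over all $T\in\cT$. This yields the claimed exact $O(m)$-time algorithm.
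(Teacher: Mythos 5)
Your proof is correct and follows essentially the same route as the paper: reduce via Lemma~\ref{lem:matryoshka} to partitioning the sorted sequence of Hamming weights, encode blocks by the same weight function $w(i,j)$, verify the quadrangle inequality, and invoke the linear-time concave least-weight sequence algorithm of \citet{Wilber_1998}. The only (cosmetic) difference is that you check concavity in one unified computation with the indicator $\IND{i\ge 1}$, whereas the paper splits into the cases $i_0>0$ and $i_0=0$; both verifications are equivalent and correct.
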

\begin{proof}%
Recall that it suffices to reduce the problem of partitioning $A=(a_1,\ldots,a_m)$ with $a_1\leq\ldots\leq a_m$ into $k$ contiguous sequences $S_1,\ldots,S_k$ minimizing $|S_1|\cdot \max(S_1)+\sum_{j=2}^k (|S_j|+1) \cdot \max(S_j)$ to the concave least-weight sequence problem.

We will define the function $w(\cdot,\cdot)$ of the concave least-weight sequence problem such that for $i<j$, $w(i,j)$ corresponds to taking the set $S=\{a_{i+1},\ldots, a_{j}\}$. Formally,
\[
    w(i,j)= 
\begin{cases}
    (j-i+1)a_j,& \text{if } 0<i<j\,;\\
    ja_j,              & \text{if } 0=i<j\,.
\end{cases}
\]

It now remains to show that the function $w(\cdot,\cdot)$ is concave: for all $0\leq i_0<i_1<j_0<j_1\leq n$, $w(i_0,j_0)+w(i_1,j_1)\leq w(i_0,j_1)+w(i_1,j_0)$. For this, consider the following two cases.

Case 1: $i_0>0$.
\begin{align*}
w(i_0,j_0)&+w(i_1,j_1) - w(i_0,j_1) - w(i_1,j_0) \\
&= (j_0-i_0+1)a_{j_0} +(j_1-i_1+1)a_{j_1} - (j_1-i_0+1)a_{j_1} - (j_0-i_1+1)a_{j_0}\\
&=(i_1-i_0)a_{j_0} - (i_1-i_0)a_{j_1}\\
&=(i_1-i_0)(a_{j_0}-a_{j_1})\\
&\leq 0\,.
\end{align*}

Case 2: $i_0=0$.
\begin{align*}
w(i_0,j_0)&+w(i_1,j_1) - w(i_0,j_1) - w(i_1,j_0) \\
&= (j_0-i_0)a_{j_0} +(j_1-i_1+1)a_{j_1} - (j_1-i_0)a_{j_1} - (j_0-i_1+1)a_{j_0}\\
&= (1-i_1)a_{j_1}-(1-i_1)a_{j_0}\\
&= (1-i_1)(a_{j_1}-a_{j_0})\\
&\leq 0\,.
\end{align*}

\end{proof}

\section{Prediction complexity} %
\label{sec:prediction}

We consider a prediction for a feature vector $\bx$ in which we find all labels such that:
$$
\heta_j(\bx) \ge \tau \,, \quad j \in \calL \,,  
$$
where $\tau \in [0,1]$ is a threshold. A natural choice of $\tau$ is $0.5$ as it leads to the optimal predictions for the Hamming loss~\citep{Dembczynski_et_al_2012a}. The threshold-based prediction can also be used for maximizing the micro- and macro-F measures as shown in~\citep{Kotlowski_Dembczynski_2015,Koyejo_et_al_2015}. Consider the tree search procedure presented in Algorithm~\ref{alg:tbs_prediction}. It starts with the root node and traverses the tree by visiting the nodes $v \in V_T$ for which 
$\heta_{pa(v)}(\bx)\geq \tau$. Obviously, the prediction consists of labels corresponding to the visited nodes.

Let us define formally the \emph{prediction cost}~$\predcost_{\tau}(T,\bx)$ for a single instance~$\bx$ as the number of calls to node classifiers in the Algorithm~\ref{alg:tbs_prediction} during prediction, i.e.:\footnote{Notice that the time complexity of Algorithm~\ref{alg:tbs_prediction} is $O(\predcost_{\tau}(T,\bx))$.}
\[
\predcost_{\tau}(T,\bx)= 1 + \sum\limits_{v\in V_T}\assert{\heta_{pa(v)}(\bx)\geq \tau} = 1 + \sum\limits_{v\in V_T}\assert{\heta_{v}(\bx)\geq \tau} \cdot \deg_v.
\]
The \emph{expected prediction cost} is then $C_{\prob(\bx), \tau}(T) = \mathbb{E}_{\bx} [ \predcost_{\tau}(T,\bx)]$.

\begin{algorithm}[ht]
\caption{\Algo{PLT.Predict}$(T, \bx, \tau)$}
\label{alg:tbs_prediction}
\begin{small}
\begin{algorithmic}[1] 
\State $\hat\by = \vec{0}$, $\calQ = \emptyset$ \Comment{Initialize the prediction vector to all zeros and a stack}
\State $\calQ\mathrm{.add}((r_T, \heta(\bx, r_T)))$ \Comment{Add the tree root and the corresponding estimate of probability}
\While{$\calQ \neq \emptyset$}  \Comment{In the loop}
	\State $(v, \heta_v(\bx)) = \calQ\mathrm{.pop}()$ \Comment{Pop an element from the stack}
	\If{$\heta_v(\bx) \ge \tau$} \Comment{If the probability estimate is greater or equal $\tau$}
	\If{$v$ is a leaf}  \Comment{If the node is a leaf}
		\State $y_v = 1$ \Comment{Set the corresponding label in the prediction vector}
	\Else \Comment{If the node is an internal node}
	\For{$v' \in \ch{v}$} \Comment{For all child nodes}
	    \State $\heta_{v'}(\bx) = \heta_v(\bx) \times \heta(\bx,v')$ \Comment{Compute $\heta_{v'}(\bx)$}
		\State $\calQ\mathrm{.add}((v', \heta_{v'}(\bx)))$  \Comment{Add the node and the computed probability estimate}
	\EndFor	
	\EndIf 
	\EndIf 
\EndWhile
\State \textbf{return} $\hat\by$ \Comment{Return the prediction vector}
\end{algorithmic}
\end{small}
\end{algorithm}

To express the complexity of the prediction algorithm for a given $\bx$ in terms of the depth and the degree of $T$, we assume that $\sum_{i=1}^m \eta_j(\bx)$ is upperbounded by a constant $P$ (e.g., for the case of $k$-sparse multi-label classification we have $P = k$). Let us denote the $L_1$-estimation error in each node $v \in V_T$  by $\epsilon_v$, i.e., $\epsilon_v = |\eta(\bx, v) -  \heta(\bx, v)|$. Then, from Theorem~\ref{thm:estimation_regret}, we have that  $|\eta_v(\bx) -  \heta_v(\bx)| \le \sum_{v' \in \Path{v}} \eta_{\pa{v}}(\bx) \cdot \epsilon_v$. Assuming that $\heta_v(\bx)$ are properly normalized to satisfy Proposition~\ref{prop:node_cond_prob_interval}, we prove the following result which is a counterpart of Proposition~\ref{prop:cost_upperbound} for training cost.
\begin{theorem}
\label{thm:comp_tresh_pred}
For Algorithm~\ref{alg:tbs_prediction} with threshold $\tau$ and any $\bx \in \calX$, we have that:
\begin{equation}
\predcost_{\tau}(T,\bx) \le  1 + \lfloor \hat{P}/\tau \rfloor \cdot \depth_T \cdot \deg_T \,,    
\label{eq:comp_tresh_pred}
\end{equation}
where $\hat{P} = \sum_{j=1}^m \heta_j(\bx) \leq P + \sum_{v \in V_T} |L(v)| \eta_{\pa{v}}(\bx) \cdot \epsilon_v$, $\depth_T = \max_{v \in L_T} \lenpath_v - 1$, and $\deg_T = \max_{v \in V_T} \deg_v$.
%
\end{theorem}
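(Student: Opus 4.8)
The plan is to bound the number of internal nodes $v$ visited by Algorithm~\ref{alg:tbs_prediction} --- these are exactly the nodes with $\heta_v(\bx)\ge\tau$ --- and then multiply by $\deg_T$ and account for the depth. First I would recall from the cost formula that $\predcost_{\tau}(T,\bx) = 1 + \sum_{v\in V_T}\assert{\heta_v(\bx)\ge\tau}\cdot\deg_v$, so it suffices to control $\sum_{v}\assert{\heta_v(\bx)\ge\tau}\cdot\deg_v$. The key observation is a level-by-level counting argument: at any fixed depth $\ell$ (distance from the root), the nodes are disjoint subtrees, so their estimated probabilities $\heta_v(\bx)$ sum to at most $\hat P = \sum_{j=1}^m\heta_j(\bx)$ --- this uses the assumed normalization so that Proposition~\ref{prop:node_cond_prob_interval} holds for the $\heta_v$, giving $\sum_{v'\in\childs{v}}\heta_{v'}(\bx)\ge\heta_v(\bx)$ and, telescoping down to the leaves, $\sum_{v\text{ at depth }\ell}\heta_v(\bx)\le\sum_{\ell_j\in L_T}\heta_j(\bx)=\hat P$. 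Since each visited node at that level contributes at least $\tau$ to this sum, there are at most $\lfloor\hat P/\tau\rfloor$ such nodes per level. There are at most $\depth_T$ levels of internal nodes, and each visited internal node has degree at most $\deg_T$, which yields $\sum_{v}\assert{\heta_v(\bx)\ge\tau}\cdot\deg_v\le\lfloor\hat P/\tau\rfloor\cdot\depth_T\cdot\deg_T$, hence \eqref{eq:comp_tresh_pred}.

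It then remains to verify the stated upper bound on $\hat P$. Here I would write $\heta_j(\bx)\le\eta_j(\bx)+|\eta_j(\bx)-\heta_j(\bx)|$ and apply Theorem~\ref{thm:estimation_regret} with $v=\ell_j$, so that $|\eta_j(\bx)-\heta_j(\bx)|\le\sum_{v'\in\Path{\ell_j}}\eta_{\pa{v'}}(\bx)\,\epsilon_{v'}$. Summing over $j\in[m]$ gives $\hat P\le\sum_{j=1}^m\eta_j(\bx)+\sum_{j=1}^m\sum_{v'\in\Path{\ell_j}}\eta_{\pa{v'}}(\bx)\,\epsilon_{v'}\le P+\sum_{j=1}^m\sum_{v'\in\Path{\ell_j}}\eta_{\pa{v'}}(\bx)\,\epsilon_{v'}$. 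Swapping the order of summation, each node $v'\in V_T$ is counted once for every leaf $\ell_j$ whose root-path passes through it, i.e.\ exactly $|L(v')|$ times, so the double sum equals $\sum_{v'\in V_T}|L(v')|\,\eta_{\pa{v'}}(\bx)\,\epsilon_{v'}$, which is the claimed bound.

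The main obstacle I anticipate is making the level-counting step fully rigorous under the normalization caveat: one has to be careful that after normalization the $\heta_v$ genuinely satisfy the subadditivity direction of Proposition~\ref{prop:node_cond_prob_interval} down each path, and that leaves reached at different depths are handled consistently (a leaf at depth less than $\ell$ contributes its own $\heta_j$ at its own level, which only helps). A clean way around any such subtlety is to sum the inequality $\sum_{v'\in\childs{v}}\heta_{v'}(\bx)\ge\heta_v(\bx)$ over all internal nodes $v$ at depths $0,1,\ldots,\ell-1$ and cancel telescoping terms, which bounds the total $\heta$-mass at depth $\ell$ (together with that of any shallower leaves) by $\hat P$; this is the only place the normalization assumption is used, and everything else is bookkeeping.
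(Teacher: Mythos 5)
Your proof is correct, and your bound on $\hat P$ is exactly the paper's: apply Theorem~\ref{thm:estimation_regret} at the leaves, sum over $j$, and swap the order of summation so that each node $v$ is counted $|L(v)|$ times. Where you genuinely differ is the counting step. The paper works with the subtree $T'$ of nodes satisfying $\heta_v(\bx)\ge\tau$: it bounds the number of \emph{leaves} of $T'$ by $\lfloor\hat P/\tau\rfloor$ (they form an antichain, so the subadditivity of the normalized estimates bounds their total mass by $\hat P$), then counts the nodes of $T'$ along the root paths from these leaves, and finally passes to the set $T''$ of all children of $T'$-nodes via a sibling count, using $\lenpath_v-1+\assert{v\notin L_T}\le\depth_T$. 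You instead bound the defined quantity $\sum_{v}\assert{\heta_v(\bx)\ge\tau}\cdot\deg_v$ directly by a level-by-level antichain argument: at each fixed depth the above-threshold nodes number at most $\lfloor\hat P/\tau\rfloor$, internal nodes occupy at most $\depth_T$ levels, and each contributes at most $\deg_T$. Both arguments hinge on the same normalization assumption (the estimated analogue of Proposition~\ref{prop:node_cond_prob_interval}), and your telescoping of $\heta_v(\bx)\le\sum_{v'\in\childs{v}}\heta_{v'}(\bx)$ down to the leaves is sound, including your handling of leaves shallower than the current level. Your version avoids the $T'/T''$ bookkeeping and is arguably cleaner as a bound on the formula defining $\predcost_\tau$, while the paper's version stays closer to the actual set of nodes Algorithm~\ref{alg:tbs_prediction} visits, which is what ties that formula to the number of classifier calls. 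Two small points worth making explicit in a final write-up: only internal nodes contribute to your sum (leaves have $\deg_v=0$), and an internal node cannot sit at depth $\depth_T$ (it would force a leaf deeper than $\depth_T$), so the number of relevant levels is indeed at most $\depth_T$; and the degenerate case $\lfloor\hat P/\tau\rfloor=0$ is consistent because then, by the same telescoping, even the root satisfies $\heta_{r_T}(\bx)\le\hat P<\tau$ and the cost is exactly $1$.
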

\begin{proof}
Let us first show an upper bound on $\hat{P}$:
\begin{eqnarray}
\hat{P} = \sum_{j=1}^m \heta_j(\bx) & \leq & \sum_{j=1}^m \left(\eta_j(\bx) + \sum_{v \in \Path{\ell_j}}  \eta_{\pa{v}}(\bx) \cdot \epsilon_v \right) \label{eqn:phat_bound}\\
 & \le & P + \sum_{j=1}^{m} \sum_{v \in \Path{\ell_j}} \eta_{\pa{v}}(\bx) \cdot \epsilon_v  \nonumber \\
 & = & P +  \sum_{v \in V_T}  |L(v)| \eta_{\pa{v}}(\bx) \cdot \epsilon_v \nonumber \,,
\end{eqnarray}
where (\ref{eqn:phat_bound}) follows from Theorem~\ref{thm:estimation_regret} and $P$ is an upper bound on $\sum_{j=1}^m \eta_j(\bx)$.


As stated before the theorem, we assume that the estimates satisfy the following:
\begin{equation}
\heta_v(\bx) \le \min \left \{1, \sum_{v' \in \childs{v}} \heta_{v'}(\bx)  \right \} \,,
\label{eqn:node_heta_prob_ub}
\end{equation}
and
\begin{equation}
\max \left \{\heta_{v'}(\bx), v' \in \childs{v} \right \} \le \heta_v(\bx)  \,.
\label{eqn:node_heta_prob_lb}
\end{equation}
These are important properties of the true probabilities stated in Proposition~\ref{prop:node_cond_prob_interval}. To satisfy them by the estimates we can perform the following normalization steps for the child nodes during prediction:
\begin{eqnarray*}
\heta_{v'}(\bx) & \leftarrow & \min(\heta_{v'}(\bx), \heta_{v})\,, \quad \textrm{for all~} v' \in \childs{v} \,, \\
\heta_{v'}(\bx) & \leftarrow & \frac{ \heta_{v'}(\bx) \cdot \heta_{v}}{\sum_{v' \in \childs{v}} \heta_{v'}(\bx)}\,, \quad \textrm{if~} \heta_{v} > \sum_{v' \in \childs{v}} \heta_{v'}(\bx) \,, \quad \textrm{for all~} v' \in \childs{v} \,.
\end{eqnarray*}
The error terms $\epsilon_v$ concern then the normalized estimates.   

Now, we move to the main part of the proof. Consider the subtree $T'$ of $T$, which consists of all nodes $v \in T$ for which $\heta_v(\bx) \ge \tau$. If there are no such nodes, from the pseudocode of Algorithm~\ref{alg:tbs_prediction}, we see that only the root classifier is called. The upperbound~(\ref{eq:comp_tresh_pred}) in this case obviously holds. However, it might not be tight as $\heta_v(\bx) < \tau$ does not imply $\hat P \ge \tau$ because of (\ref{eqn:node_heta_prob_ub}). 

If $T'$ has at least one node, Algorithm~\ref{alg:tbs_prediction} visits each node of $T'$ (i.e., calls a corresponding classifier and add the node to a stack), since for each parent node we have (\ref{eqn:node_heta_prob_lb}). Moreover, Algorithm~\ref{alg:tbs_prediction} visits all children of nodes $T'$ (some of them are already in $T'$). Let the subtree $T''$ consist of all nodes of $T'$ and their child nodes. Certainly $T' \subseteq T'' \subseteq T$. To prove the theorem we count first the number of nodes in $T'$ and then the number of nodes in $T''$, which gives as the final result. 

If the number of nodes in $T'$ is greater than or equal to 1, then certainly $r_{T}$ is in $T'$. Let us consider next the number of leaves of $T'$. Observe that $\sum_{v \in L_{T'}} \heta_v(\bx) \le \hat P$. This is because $\sum_{v \in L_{T'}} \heta_v(\bx) \le \sum_{v \in L_{T}} \heta_v(\bx) \le \hat P$  , i.e., $v \in L_{T'}$ might be an internal node in $T$ and its $\heta_v(\bx)$ is at most the sum of probability estimates of the leaves underneath $v$ according to (\ref{eqn:node_heta_prob_ub}).  From this we get the following upper bound on the number of leaves in $T'$:
\begin{equation}
|L_{T'}| \le \lfloor \hat P/\tau \rfloor \,.
\label{eqn:num_leaves_ub}
\end{equation}
Since the degree of internal nodes in $T'$ might be $1$, to upperbound the number of all nodes in $T'$ we count the number of nodes on all paths from leaves to the root, but counting the root node only once, i.e.:
$$
|V_{T'}| \le 1 + \sum_{v \in L_{T'}} (\lenpath_v - 1) \,.
$$
Next, notice that for each $v \in T'$ its all siblings are in $T''$ unless $v$ is the root node. This is because if non-root node $v$ is in $T'$ then its parent is also in $T'$ according to (\ref{eqn:node_heta_prob_lb}) and $T''$ contains all child nodes of nodes in $T'$.  The rest of nodes in $T''$ are the child nodes of leaves of $T'$, unless a leaf of $T'$ is also a leaf of $T$. Therefore, we have 
$$
|V_{T''}| \le 1 +  \sum_{v \in L_{T'}} \deg_T (\lenpath_v - 1) + \sum_{v \in L_{T'}} \deg_T \assert{v \not \in L_{T}} \,,
$$
with $\deg_T$ being the highest possible degree of a node.
Since (\ref{eqn:num_leaves_ub}) and
$$
\lenpath_v - 1 + \assert{v \not \in L_{T}} \le \depth_T \,,
$$ 
i.e., the longest path cannot be longer than the depth of the tree plus 1, we finally get:
$$
|V_{T''}| \le  1 +  \lfloor \hat P/\tau \rfloor \cdot \depth_T \cdot \deg_T \,. 
$$
This ends the proof as the number of nodes in $T''$ is equivalent to the number of calls to the node classifiers, i.e., $\predcost_{\tau}(T,\bx)$.

\end{proof}
\begin{remark}
For a tree of constant $\deg_T=\lambda (\ge 2)$ and $\depth_T=\log_\lambda{m}$, the cost of Algorithm~\ref{alg:tbs_prediction} is $O(\log m)$ if $\hat{P}$ is upper bounded by a constant or node classifiers predict with no error, i.e., $\epsilon_v = 0$, for all $v \in V_T$.  
\end{remark}

The above result does not, however, relate directly the prediction cost to the training cost. The next theorem shows this relation in terms of expected costs. 
\begin{theorem}
\label{thm:inferenceCostUpperBound}
Using the notation above, it holds that
\[
C_{\prob(\bx), \tau}(T) \leq \frac{1}{\tau}\bigg(C_\prob(T)+\sum\limits_{v\in V_T}\mathbb{E}_{\bx}\left[\eta_{\pa{v}}(\bx)\cdot  \epsilon_v \right]\cdot|T(v)|\cdot\deg_v\bigg)-\frac{1-\tau}{\tau},
\]
where~$|T(v)|$ denotes the number of inner nodes in the subtree~$T(v)$ rooted at~$v$.
\end{theorem}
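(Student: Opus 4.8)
The plan is to start from the exact formula for the expected prediction cost,
\[
C_{\prob(\bx),\tau}(T)=1+\sum_{v\in V_T}\mathbb{E}_{\bx}\bigl[\assert{\heta_v(\bx)\ge\tau}\bigr]\cdot\deg_v ,
\]
and bound each indicator $\assert{\heta_v(\bx)\ge\tau}$ from above by $\heta_v(\bx)/\tau$ (Markov-type inequality, valid since $\heta_v(\bx)\ge 0$). This turns the sum into $\tfrac1\tau\sum_{v\in V_T}\mathbb{E}_{\bx}[\heta_v(\bx)]\cdot\deg_v$. The next step is to replace $\heta_v(\bx)$ by $\eta_v(\bx)$ plus an error term: by Theorem~\ref{thm:estimation_regret}, $\heta_v(\bx)\le\eta_v(\bx)+\sum_{v'\in\Path{v}}\eta_{\pa{v'}}(\bx)\epsilon_{v'}$, so
\[
\mathbb{E}_{\bx}[\heta_v(\bx)]\le \prob(z_v=1)+\sum_{v'\in\Path{v}}\mathbb{E}_{\bx}\bigl[\eta_{\pa{v'}}(\bx)\epsilon_{v'}\bigr].
\]

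Now I would substitute this into the sum and split it into two pieces. The first piece, $\tfrac1\tau\sum_{v\in V_T}\prob(z_v=1)\deg_v$, is exactly $\tfrac1\tau(C_\prob(T)-1)$ by Proposition~\ref{prop:cost_decomp_basic}. For the second piece I need to handle $\tfrac1\tau\sum_{v\in V_T}\deg_v\sum_{v'\in\Path{v}}\mathbb{E}_{\bx}[\eta_{\pa{v'}}(\bx)\epsilon_{v'}]$. The key is to swap the order of summation: a term indexed by a node $v'$ appears in the inner sum for $v$ precisely when $v'\in\Path{v}$, i.e.\ when $v$ lies in the subtree $T(v')$ rooted at $v'$. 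Hence
\[
\sum_{v\in V_T}\deg_v\sum_{v'\in\Path{v}}\mathbb{E}_{\bx}[\eta_{\pa{v'}}(\bx)\epsilon_{v'}]
=\sum_{v'\in V_T}\mathbb{E}_{\bx}[\eta_{\pa{v'}}(\bx)\epsilon_{v'}]\sum_{v\in T(v')}\deg_v .
\]
The inner factor $\sum_{v\in T(v')}\deg_v$ counts the total number of (parent, child) edges within $T(v')$, which is at most $|T(v')|\cdot\deg_v$ where $|T(v')|$ is the number of inner nodes of $T(v')$ and $\deg_v$ is shorthand for $\deg_{v'}$ — wait, I need to be careful: the paper's bound has $|T(v)|\cdot\deg_v$, so I should bound $\sum_{v\in T(v')}\deg_v\le |T(v')|\cdot\deg_{v'}$? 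That is false in general (a descendant can have larger degree). Rather, the bound to establish is $\sum_{u\in T(v')}\deg_u = (\text{number of edges in }T(v'))\le|T(v')|\cdot\deg_{v'}$ only if $\deg_{v'}$ is the max degree in $T(v')$; since the statement writes $\deg_v$ with $v=v'$, I will interpret it as: the number of nodes of $T(v')$ is at most $|T(v')|\cdot\deg_{v'}$ is not what's wanted either. The clean route is simply $\sum_{u\in T(v')}\deg_u\le |T(v')|\cdot\deg_{\max}$, but the stated theorem uses the node-local degree, so the intended reading must be that $\deg_v$ in the theorem denotes $\deg_{v'}$ as an upper bound on degrees inside $T(v')$ under an implicit balanced/regular-tree convention, or the number of children times the count of inner nodes overcounts edges — I will follow the paper and bound $\sum_{u\in T(v')}\deg_u\le|T(v')|\cdot\deg_{v'}$ using whatever structural assumption makes each inner node's degree at most $\deg_{v'}$; failing that, the honest bound is $|T(v')|\cdot\deg_{T(v')}$ and I would note this.

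Putting the pieces together gives $C_{\prob(\bx),\tau}(T)\le 1+\tfrac1\tau(C_\prob(T)-1)+\tfrac1\tau\sum_{v\in V_T}\mathbb{E}_{\bx}[\eta_{\pa{v}}(\bx)\epsilon_v]|T(v)|\deg_v$, and rewriting $1-\tfrac1\tau=-\tfrac{1-\tau}{\tau}$ yields exactly the claimed inequality. \textbf{Main obstacle.} The delicate point is the double-counting/summation-swap step and getting the geometric factor $|T(v)|\cdot\deg_v$ exactly right: one must argue that each error term $\mathbb{E}_{\bx}[\eta_{\pa{v}}(\bx)\epsilon_v]$ is charged once for every node $u$ in the subtree $T(v)$ (weighted by $\deg_u$), and then bound $\sum_{u\in T(v)}\deg_u$ by $|T(v)|\cdot\deg_v$; reconciling this with the precise meaning of $\deg_v$ in the statement (versus the maximum degree in the subtree) is where care is needed. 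Everything else — the Markov bound on the indicator, linearity of expectation, and the appeal to Proposition~\ref{prop:cost_decomp_basic} and Theorem~\ref{thm:estimation_regret} — is routine.
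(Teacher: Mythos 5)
Your proposal follows essentially the same route as the paper's proof: bound the indicator $\assert{\heta_v(\bx)\ge\tau}$ by $\heta_v(\bx)/\tau$, apply Theorem~\ref{thm:estimation_regret} to replace $\heta_v$ by $\eta_v$ plus the path error terms, identify $1+\sum_v \mathbb{E}_{\bx}[\eta_v(\bx)]\deg_v$ with $C_\prob(T)$ via Proposition~\ref{prop:cost_decomp_basic}, absorb the constants into $-\frac{1-\tau}{\tau}$, and swap the order of summation so each node's error is charged over its subtree. Your caveat about the factor $|T(v)|\cdot\deg_v$ is well taken: the exchange yields $\sum_{u\in T(v)}\deg_u$, and the paper's own proof asserts the final inequality without justifying that this is at most $|T(v)|\cdot\deg_v$ (which, as you observe, implicitly requires $\deg_v$ to dominate the degrees inside $T(v)$, e.g.\ under a regular-degree convention or with $\deg_v$ read as a maximum degree), so on this point you are at least as careful as the paper.
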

\begin{proof}
We can upper bound the expected inference cost as follows:
\begin{align}
\mathbb{E}_{\bx}\left[\predcost_{\tau}(T,\bx)\right] & 
  =\mathbb{E}_{\bx}\left[1+\sum\limits_{v\in V_T}\mathbb{I} \left\{ \heta_{v}(\bx)\geq \tau\right\}\deg_v\right] \leq
  \mathbb{E}_{\bx}\left[1+\sum\limits_{v\in V_T}\frac{\heta_{v}(\bx)}{\tau}\deg_v\right] \notag \\
  & \leq
  1+\sum\limits_{v\in V_T}\frac{\mathbb{E}_{\bx}\left[\eta_{v}(\bx)+|\heta_{v}(\bx)-\eta_{v}(\bx)| \right]}{\tau}\deg_v \notag \\
  & \leq
  \frac{1}{\tau}\left(1+\sum\limits_{v\in V_T}\mathbb{E}_{\bx}\bigg[\eta_v(\bx)+\sum\limits_{v'\in\text{Path}(v)}\eta_{\pa{v'}}(\bx)\cdot \epsilon_v \bigg]\cdot \deg_v\right)-\frac{1-\tau}{\tau} \label{eq:applyTh1} \\
  & \leq
  \frac{1}{\tau}\bigg(C_\prob(T)+\sum\limits_{v\in V_T}\mathbb{E}_{\bx}\left[\eta_{\pa{v}}(\bx)\cdot \epsilon_v \right]\cdot|T(v)|\cdot\deg_v\bigg)-\frac{1-\tau}{\tau},
\end{align}
where (\ref{eq:applyTh1}) follows from Theorem~\ref{thm:estimation_regret}.
\end{proof}

\begin{proposition}\label{prop:InfeerenceCostTightExample}
The bound in Theorem~\ref{thm:inferenceCostUpperBound} is tight.
\end{proposition}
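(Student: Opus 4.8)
The plan is to exhibit a single instance --- a tree $T$, a distribution $\prob$, a collection $H$ of node classifiers, and a threshold $\tau$ --- on which every inequality used in the proof of Theorem~\ref{thm:inferenceCostUpperBound} becomes an equality, so that the stated bound is attained. Reading that proof, the inequalities to be made tight are: $\mathbb{I}\{\heta_v(\bx)\ge\tau\}\le\heta_v(\bx)/\tau$ (an equality exactly when $\heta_v(\bx)\in\{0,\tau\}$), $\heta_v(\bx)\le\eta_v(\bx)+|\heta_v(\bx)-\eta_v(\bx)|$ (an equality when $\heta_v(\bx)\ge\eta_v(\bx)$), and the bound of Theorem~\ref{thm:estimation_regret}.

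A convenient witness is the depth-$1$ star. Let $\calX=\{\bx\}$ be a single point; let $T$ have root $r$ and $m$ leaves $\ell_1,\ldots,\ell_m$; fix $\tau\in(0,1]$ and $\tau'\le\tau$; and let $\prob(\cdot\mid\bx)$ put mass $\tau'$ on the label vector with $y_1=1$ and $y_j=0$ for $j\ge 2$, and mass $1-\tau'$ on $\vec{0}$, so that $\eta_r(\bx)=\prob(z_r=1\mid\bx)=\tau'$. Set the root classifier to $\heta(\bx,r)=\tau$, so it overestimates $\eta_r(\bx)$ by exactly $\epsilon_r=\tau-\tau'\ge 0$; set $\heta(\bx,\ell_1)=1$ and $\heta(\bx,\ell_j)=0$ for $j\ge 2$ (the leaf estimates are immaterial to the bound, and this choice makes Proposition~\ref{prop:node_cond_prob_interval} hold exactly, so any normalization is a no-op). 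Since $\deg_{\ell_j}=0$, only $v=r$ is relevant in the three inequalities above, and there all three are equalities: $\heta_r(\bx)=\tau$, $\heta_r(\bx)\ge\eta_r(\bx)$, and Theorem~\ref{thm:estimation_regret} at the root reads $|\heta_r(\bx)-\eta_r(\bx)|=\eta_{\pa{r}}(\bx)\,\epsilon_r$ with $\eta_{\pa{r}}(\bx)=1$, which is an identity.

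It then remains to evaluate the two sides. Because $\heta_r(\bx)=\tau\ge\tau$, Algorithm~\ref{alg:tbs_prediction} expands all $m$ children of the root, so $C_{\prob(\bx),\tau}(T)=\predcost_\tau(T,\bx)=1+\deg_r=1+m$. For the right-hand side, Proposition~\ref{prop:cost_decomp_basic} gives $C_\prob(T)=1+\prob(z_r=1)\,\deg_r=1+\tau' m$; the only nonzero term in $\sum_{v\in V_T}\mathbb{E}_{\bx}[\eta_{\pa{v}}(\bx)\epsilon_v]\,|T(v)|\,\deg_v$ is the one at $v=r$ (each leaf $v$ has $|T(v)|=0$), equal to $\eta_{\pa{r}}(\bx)\,\epsilon_r\,|T(r)|\,\deg_r=(\tau-\tau')\,m$; hence the right-hand side equals $\tfrac1\tau\bigl(1+\tau' m+(\tau-\tau')m\bigr)-\tfrac{1-\tau}{\tau}=\tfrac1\tau(1+m\tau)-\tfrac{1-\tau}{\tau}=1+m$, matching the left-hand side. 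Choosing $\tau'=\tau$ already gives equality with no estimation error, while $\tau'<\tau$ additionally shows that the error term in Theorem~\ref{thm:inferenceCostUpperBound} cannot be dropped. I do not foresee a genuine obstacle: this is a direct verification, the only mildly delicate point being to confirm that the closed form of $\predcost_\tau$ agrees with the number of classifier calls Algorithm~\ref{alg:tbs_prediction} makes on the star, which is immediate.
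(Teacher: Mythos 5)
Your construction is correct, and it rests on the same core idea as the paper's proof: exhibit an instance on which the estimated probability of every relevant node equals exactly $\tau$, so that the key step $\assert{\heta_v(\bx)\geq\tau}\leq\heta_v(\bx)/\tau$ (and every subsequent inequality) holds with equality, and then evaluate both sides in closed form. The difference is in the witness. The paper keeps the tree arbitrary, sets $\epsilon_v=0$ everywhere, and takes $\eta(\bx,r)=\tau$ with all other node conditionals equal to $1$, so both costs are computed for any tree shape; this only certifies tightness of the error-free part of the bound in Theorem~\ref{thm:inferenceCostUpperBound}. You specialize to the depth-one star but allow an overestimating root classifier, $\heta(\bx,r)=\tau\geq\tau'=\eta_r(\bx)$, which makes every inequality in the proof of Theorem~\ref{thm:inferenceCostUpperBound} --- including the application of Theorem~\ref{thm:estimation_regret} and the regrouping into the $|T(v)|$ term --- an equality; your computation (both sides equal $1+m$) is right, with $|T(r)|=1$ and only the root contributing since leaves have degree zero. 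This buys you something the paper's example does not: it shows that the estimation-error term in the bound is itself attained and cannot be dropped, while the choice $\tau'=\tau$ recovers (the star case of) the paper's error-free example. What you give up is generality in the tree shape, which is immaterial for the statement "the bound is tight." Either witness suffices.
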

\begin{proof}
We construct a tight example as follows. First, we assume that the classifier predicts with no error,~$\epsilon_v=0$ for all~$\bx,v\in V_T$. Second, we consider all node probabilities~$\eta(\bx,v)=1$ for all~$v\in V_T$ except for the root node, for which~$\eta(\bx,r)=\tau$. In this case all conditional probabilities~$\eta_j(\bx)=\tau$, $j \in \calL$, and we can write the exact values of the expected costs:
\[
C_{\prob}(T)=1+\sum\limits_{v\in V_T} \tau\deg_v=\tau\cdot|V_T|+1,
\]
and
\[
C_{\prob(\bx), \tau}(T)=1+\sum\limits_{v\in V_T} \deg_v=|V_T|+1.
\]
\end{proof}

\begin{corollary}
In the case of exact predictions,~$\epsilon_v=0$ for all~$\bx,v\in V_T$, for $\tau=0.5$ we have
\[
C_{\prob(\bx), 0.5}(T)\leq 2C_\prob(T)-1.
\]
\end{corollary}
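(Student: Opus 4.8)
The plan is to obtain this as an immediate specialization of Theorem~\ref{thm:inferenceCostUpperBound}, so there is no new argument to carry out — only two substitutions. Recall the bound proved there:
\[
C_{\prob(\bx), \tau}(T) \leq \frac{1}{\tau}\bigg(C_\prob(T)+\sum_{v\in V_T}\mathbb{E}_{\bx}\left[\eta_{\pa{v}}(\bx)\cdot \epsilon_v\right]\cdot|T(v)|\cdot\deg_v\bigg)-\frac{1-\tau}{\tau}.
\]
First I would impose the hypothesis $\epsilon_v=0$ for all $v\in V_T$ and all $\bx$. Each summand $\mathbb{E}_{\bx}[\eta_{\pa{v}}(\bx)\cdot \epsilon_v]\cdot|T(v)|\cdot\deg_v$ then vanishes (the inner product is identically $0$, hence so is its expectation), and the entire correction sum disappears, leaving $C_{\prob(\bx),\tau}(T)\le \tfrac{1}{\tau}C_\prob(T)-\tfrac{1-\tau}{\tau}$.

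Second, I would plug in $\tau=\tfrac12$, so that $\tfrac{1}{\tau}=2$ and $\tfrac{1-\tau}{\tau}=1$, which gives exactly $C_{\prob(\bx),0.5}(T)\le 2C_\prob(T)-1$, as claimed. There is essentially no obstacle: the real work was already done in Theorem~\ref{thm:inferenceCostUpperBound}, and beneath it in Theorem~\ref{thm:estimation_regret}, whose propagation term $\sum_{v'\in\Path{v}}\eta_{\pa{v'}}(\bx)\epsilon_{v}$ is precisely what collapses to zero in the exact-prediction regime. The one point worth a sentence of justification is that the normalization assumptions used to derive Theorem~\ref{thm:inferenceCostUpperBound} remain valid here: with $\epsilon_v=0$ the (normalized) estimates coincide with the true conditional probabilities, which satisfy Proposition~\ref{prop:node_cond_prob_interval} and hence (\ref{eqn:node_heta_prob_ub})–(\ref{eqn:node_heta_prob_lb}) automatically, so the theorem applies without modification.
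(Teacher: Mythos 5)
Your derivation is correct and is exactly the intended one: the paper states this corollary without proof precisely because it follows by setting $\epsilon_v=0$ and $\tau=0.5$ in Theorem~\ref{thm:inferenceCostUpperBound}, which is what you do. Your extra remark that exact predictions automatically satisfy the normalization conditions via Proposition~\ref{prop:node_cond_prob_interval} is a sensible, harmless addition.
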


\begin{remark}
A natural question is whether the lower bound on~$C_{\prob(\bx), \tau}(T)$ exists of the form
\[
C_{\prob(\bx), \tau}(T)=\Omega\left(C_\prob(T)\right)
\]
under the assumption~$\epsilon_v=0$ for all~$\bx,v\in V_T$.
To see that no such bound exists without additional assumptions on~$\eta_v(\bx)$, one can consider the following example: let all~$\eta_v(\bx)=\tau-\varepsilon$, $v \in V_T$, for $\varepsilon > 0$. In this case we have
\[
C_\prob(T)=(\tau-\varepsilon)\cdot|V_T|+1,\quad C_{\prob(\bx), \tau}(T)=1.
\]
\end{remark}

\begin{remark}
The above theorems contain the term $\eta_{\pa{v}}(\bx) \cdot \epsilon_v $ either multiplied by $|L(v)|$ or $|T(v)|$. It nicely shows an interplay between the $L_1$-estimation error ($\epsilon_v$) and the \emph{importance} ($\eta_{\pa{v}}(\bx)$ multiplied by $|L(v)|$ or $|T(v)|$) of node $v \in V_T$. For example, the root node has the highest importance, but the error there should be the smallest as all training examples are used there and the learning problem is relatively simple as the root classifier has to estimate the probability whether there exists at least one label in $\by$. In many cases, we can assume that this probability is 1 and the error will be 0 then. In turn, the error in the leaves nodes can be substantial as the number of training examples there is the smallest, but their importance is also the smallest. 
\end{remark}

\section{Conclusions and future work}
\label{sec:conclusion}

In this paper, we addressed the problem of optimizing the training and test costs of \Algo{PLT}s. We showed that optimizing the training cost is an NP-complete problem, nevertheless it has several tractable special cases for which either exact or approximate solution can be efficiently found. 
We also show guarantees for the test cost and characterize its relation to the training cost. 

Several exciting open questions have arisen from this work. One is to prove either a reasonable lower bound for the general multi-label case or to prove the tightness of the $O(\log m)$ approximation. Second is to find a tree with the optimal statistical error and show its relation to the computational cost objective.

\bibliographystyle{abbrvnat}

\appendix

\section{Proof of Theorem~\ref{thm:np_complete}}
\npcomplete*
\begin{proof}%
It is easy to see that $\bobby\in\text{NP}$. Indeed, given an optimal \bobby{} tree, one can in polynomial time verify that its cost is at most $w$.\footnote{We also need to verify that there exists an optimal tree of polynomial size. Since there always exists an optimal tree where each internal node has at least $2$ children, there exists an optimal tree with at most $m-1$ internal nodes.}

Now we will show that \bobby{} is NP-hard by giving a polynomial time reduction from Clique to \bobby{}. The reduction consists of two steps. In the first step, we reduce Clique to Directed Clique with $k=\frac{2|V|}{3}$. In the second step, we reduce this version of the Directed Clique problem to \bobby{}.

\paragraph{Preparations.}
Let $G'=(V',E')$ and $1\leq k' \leq |V'|$ be an instance of the Clique problem. W.l.o.g. let us assume that $G'$ does not have isolated nodes. We will now construct an undirected graph $G^*=(V^*,E^*)$ and $k^*=\frac{2|V^*|}{3}$ such that $G'$ contains a clique of size $k'$ if and only if $G^*$ contains a cliques of size $k^*$.

First, we add all nodes and edges of $G'$ to $G^*$. If $k'\leq\frac{2|V'|}{3}$, then we add $2|V'|-3k'$ nodes to $G^*$ which are connected to all nodes of $G^*$. It is now easy to see that $G'$ contains a $k'$-cliques if and only if $G^*$ contains a $k^*=\frac{2|V^*|}{3}$-clique. 

If $k'>\frac{2|V'|}{3}$, then we first make $k'$ even. Namely, if $k'$ is odd, we increase $k'$ by one and add one node to $G'$ connected to all other nodes. Now we add a path on $\frac{3k-2|V'|}{2}$ nodes to $G^*$, and connect one node of the path to an arbitrary node of $G$. Again, it is easy to see that the new instance of the clique problem (with $k^*=\frac{2|V^*|}{3}$) is equivalent to the original one. We note that the constructed graph does not have isolated nodes.

We now construct a \label{directed} graph $G=(V, E)$ and $k=\frac{2|V|}{3}$ such that it contains a directed $k$-clique (with self-loops) if and only if $G'$ has a $k'$-clique. To this end, we set $V=V^*$, for each edge $\{u,v\}$ we create two arcs $(u,v)$ and $(v,u)$ in $G$, and we also add all $|V|$ self-loops in $G$.

\paragraph{Reduction.}
Let $n=|V|$ and $m=|E|$, w.l.o.g. assume that $n\geq 30$ and $n$ is a multiple of $30$. Now we reduce the Clique problem with $k=\frac{2n}{3}$ to an instance of $\bobby$ with $m$ vectors in $d=n+\ell$ dimension where 
\begin{align*}
\ell=0.145n^3 \; .
\end{align*}
(Since $n$ is a multiple of $10$, $\ell$ is an integer.)
For every arc $e=(v_i,v_j)$ of the graph $G$, we create a Boolean vector $\cby_e\in\{0,1\}^d$ as follows. In the first $n$ coordinates, we set only the $i$th and $j$th coordinates to $1$ (if $e$ is a self-loop, we only set one coordinate to $1$). The last $\ell$ coordinates are always set to $1$. Now we set
\begin{align*}
w=\ell(m+1)+k^3+nm-nk^2+n=\ell(m+1)-\frac{4n^3}{27}+nm+n \; .
\end{align*}
We claim that this instance of the \bobby{} problem has a $T\in \cT$ tree of cost at most $w+n$ if and only if $G$ contains a $k$-clique. We also remark that both reductions run in time polynomial in $n$ and $m$.

\paragraph{Correctness.}
First we show that if $G$ contains a $k$-clique, then there is a $T\in \cT$ of cost at most $w+n$. We construct a tree with two internal nodes (including the root) as follows. All the $k^2$ arcs of the directed $k$-clique (with self-loops) are connected to the internal node $v$. This internal node and the remaining $m-k^2$ arcs are connected to the root $r$. Let us now compute the labels $\cbz_v$ and $\cbz_r$. Since all children of $v$ correspond to the edges forming a $k$-clique, the vector $\cbz_v$ has exactly $k$ ones in the first $n$ coordinates, and it has all $\ell$ ones in the remaining coordinates. Since there are no isolated nodes in $G$, the vector $\cbz_r$ is $1^d$. Now the cost $c(v) =(k+\ell)k^2$, $c(r)=(n+\ell)(m-k^2+1)$, and the total cost of the constructed tree is
\begin{align*}
c(T, \bY)=n+c(v)+c(r)=n+(k+\ell)k^2+(n+\ell)(m-k^2+1)=n+w \; .
\end{align*}
\begin{figure}[ht]
\begin{center}
\scalebox{1}{
\begin{tikzpicture}
[every node/.style={draw=black,rectangle,minimum width=2cm,minimum height=5mm,black,inner sep=.5mm,rounded corners=1ex,draw,font=\footnotesize,inner sep=3pt},
level 1/.style={sibling distance=36mm},
level 2/.style={sibling distance=24mm}, 
level distance=12mm,line width=.3mm, scale=1,transform shape]

\draw [decorate,decoration={brace,amplitude=10pt,mirror},xshift=0pt,yshift=-4pt]
(-3.8,3.4) -- (3.1,3.4) node [draw=none,black,midway,yshift=-6mm] 
{$k^2$ leaves};

\draw [decorate,decoration={brace,amplitude=10pt,mirror},xshift=0pt,yshift=-4pt]
(2,4.6) -- (11.7,4.6) node [draw=none,black,midway,yshift=-6mm] 
{$m-k^2$ leaves};
\node[draw=none] at (-0.3,5.3) {node $v$};
\node[draw=none] at (5,6.5) {root $r$};

\node at (5,6) {$1^{n+\ell}$} [grow=down]
  child {node {$1^k0^{n-k}1^\ell$}
    child {node {$10^{n-1}1^\ell$}  } 
    child {node {$\ldots$}  }   
    child {node {$0^{k-1}10^{n-k}1^\ell$}  }   
  }
  child {node {$0^k110^{n-k-2}1^\ell$}}
  child {node {$\ldots$}}
  child {node {$0^k1010^{n-k-3}1^\ell$}
  }
  ;

\end{tikzpicture}
}
\caption{The tree corresponding to a $k$-clique. All $k^2$ arcs forming the clique are connected to the internal node $v$ on the left, all the remaining arc are connected dicrectly to the root $r$. The label of the root is $\cbz_r=1^d=1^{n+\ell}$, and the label $\cbz_v$ of the node $v$ has $k$ ones in the first part (corresponding to the nodes of the clique) and $\ell$ ones in the second part.}
\end{center}
\end{figure}
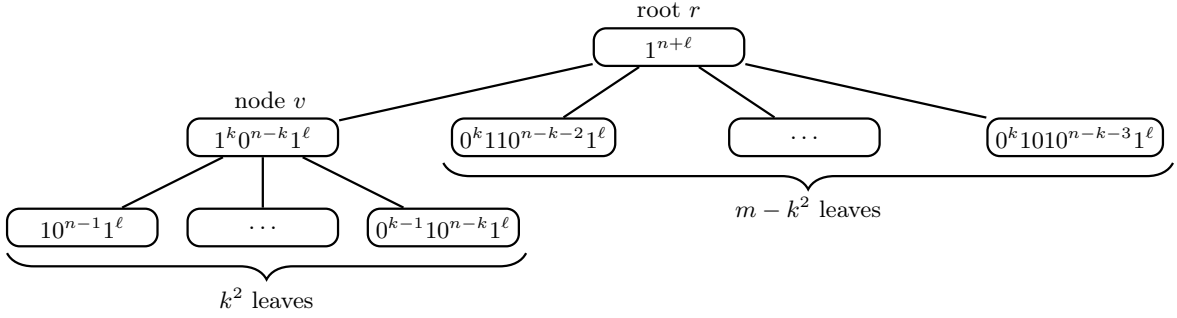

Now we will show that a tree $T$ of cost at most $w+n$ gives us a $k$-clique in $G$. To this end, we will consider three cases. Let $i$ be the number of internal nodes of $T$ (recall that the root counts as an internal node, thus, $i\geq 1$). We will show that if $i=1$ or $i>3$, then the cost of $T$ is larger than $w+n$, and if $i=2$ then any tree of cost $w+n$ must look exactly as the one above, which gives us a $k$-clique.

\begin{itemize}
\item $i=1$. If $T$ has only one internal node (the root), then the cost of the tree is the cost of the root plus $n$. Since there are no isolated nodes in $G$, the label of the root $\cbz_r$ has all ones. Thus,
\begin{align*}
c(r)=(n+\ell)m=nm+\ell(m+1)-0.125n^3>\ell(m+1)-\frac{4n^3}{27}+nm+n = w \; 
\end{align*}
for $n\geq 7$. This implies that $c(T,\bY)=n+c(r)>n+w$.
\item $i=2$. Assume that one internal node $v$ is connected to $e$ leaves which span $t\leq n$ nodes of $G$, where $e\leq t^2$, and the root $r$ is connected to this internal node $v$ and the remaining $m-e$ leaves. Both $\cbz_v$ and $\cbz_r$ have $\ell$ ones in the last $\ell$ coordinates, and these two nodes $v$ and $r$ together have $(m+1)$ children. Thus, the total contribution to the cost $c(T, \bY)$ of the last $\ell$ coordinates of $\cbz_v$ and $\cbz_r$ is $(m+1)\ell$. The root $\cbz_r$ has $n$ ones in the first $n$ coordinates, and $\cbz_v$ has $t$ ones. Therefore, the contribution of the first $n$ coordinates of $\cbz_v$ and $\cbz_r$ to $c(T, \bY)$ is $et+(m-e+1)n$. Now we have that $c(r)+c(v)=(m+1)\ell+et+(m-e+1)n$. We will show that $c(r)+c(v)\leq w$ if and only if $e=k^2$ and $t=k$ which corresponds to a $k$-clique in the original graph.
\begin{align*}
c(r)+c(v)-w=et-en+\frac{4n^3}{27}\geq t^3-t^2n+\frac{4n^3}{27} \; .
\end{align*}
By taking the derivative with respect to $t$, we see that this expession is greater than $0$ for $t\neq\frac{2n}{3}$. Thus, we have that $t=\frac{2n}{3}=k$, and $e=k^2$.
\item $i\geq 3$. Each inner node has $\ell$ ones in the last $\ell$ coordinates, and all inner nodes together have $(m+i-1)$ children. This means that the last $\ell$ coordinates of the corresponding vectors $\cbz$ contribute $(m+i-1)\ell$ to the cost $c(T,\bY)$. Let $m_1,\ldots,m_i$ be the numbers of leaves connected to each of the internal nodes, $\sum_{j=1}^i m_j=m$. Since $m_j$ arcs span at least $\sqrt{m_j}$ nodes, the contribution of the first $n$ coordinates of the corresponding vectors $\cbz$ to the cost $c(T,\bY)$ is at least $\sum_{j=1}^i m_j^{3/2}$. Thus, by H\"older's inequality,
\begin{align*}
c(T, \bY) \geq n+(m+i-1)\ell+\sum_{j=1}^i m_j^{3/2} \geq n+(m+i-1)\ell + \frac{m^{3/2}}{\sqrt{i}} \; .
\end{align*}
In order to show that $c(T, \bY)-n>w$ for all $i\geq 3$ and $1\leq m\leq n^2$, we show that the following function is always positive:
\begin{align*}
f(m,i)=(m+i-1)\ell+ \frac{m^{3/2}}{\sqrt{i}}-w= (i-2)\cdot 0.145n^3+\frac{m^{3/2}}{\sqrt{i}}+\frac{4n^3}{27}-nm-n\; .
\end{align*}
By taking the derivative w.r.t. $m$, we see that for every value of $i\geq 3, f(m,i)$ takes its minimum at $m=n^2$. By plugging in $m=n^2$, we get
\begin{align*}
g(i)=n^3\left(0.145(i-2)+\frac{1}{\sqrt{i}}-\frac{23}{27}\right)-n \;.
\end{align*}
By taking the derivative of $g(i)$, we get that $g(i)$ is minimized when $i=3$, in which case we have
\begin{align*}
c(T, \bY)-w-n\geq n^3\left(0.145+\frac{1}{\sqrt{3}}-\frac{23}{27}\right)-n>0.1n^3-n\geq0
\end{align*}
for $n\geq 4$.
\end{itemize}
\end{proof}

\end{document}